\documentclass[twoside]{article}
\usepackage{mathtools}

\usepackage{algorithm}
\usepackage{algorithmic}
\usepackage{amsmath,amssymb,amsthm,amsfonts,bm,bbm, nicefrac}

\usepackage{tabularx}
\usepackage[svgnames,x11names,table]{xcolor}

\allowdisplaybreaks
\usepackage{graphicx}
\usepackage{subfigure}
\usepackage{booktabs} 
\usepackage[hyphens]{url}
\usepackage{hyperref}
\usepackage[hyphenbreaks]{breakurl}

 \hypersetup{ 
    breaklinks=true,
    colorlinks=true,
    linkcolor=DarkBlue,
    citecolor=DarkBlue,
    filecolor=DarkBlue,
    urlcolor=DarkBlue,
}








\def\eqref#1{equation~\ref{#1}}









\def\1{\bm{1}}










\DeclareMathAlphabet{\mathsfit}{\encodingdefault}{\sfdefault}{m}{sl}
\SetMathAlphabet{\mathsfit}{bold}{\encodingdefault}{\sfdefault}{bx}{n}


\def\gF{{\mathcal{F}}}

\def\gH{{\mathcal{H}}}

\def\gJ{{\mathcal{J}}}

\def\gN{{\mathcal{N}}}
\def\gP{{\mathcal{P}}}
\def\gQ{{\mathcal{Q}}}

\def\gW{{\mathcal{W}}}
\def\gX{{\mathcal{X}}}



\def\sR{{\mathbb{R}}}








\newcommand{\E}{\mathbb{E}}



\DeclareMathOperator*{\argmin}{arg\,min}

\newenvironment{enumerate*}%
{\begin{enumerate}[leftmargin=*,topsep=0pt]%
		\setlength{\itemsep}{0pt}%
		\setlength{\parskip}{0pt}}%
	{\end{enumerate}}

\newcommand{\pre}{\mathrm{p}}
\newcommand{\down}{\mathrm{d}}

\theoremstyle{plain}
\newtheorem{theorem}{Theorem}[section]
\newtheorem{proposition}[theorem]{Proposition}
\newtheorem{lemma}[theorem]{Lemma}
\newtheorem{corollary}[theorem]{Corollary}
\theoremstyle{definition}
\newtheorem{definition}[theorem]{Definition}
\newtheorem{assumption}[theorem]{Assumption}
\theoremstyle{remark}
\newtheorem{remark}[theorem]{Remark}

\def\showcomments{1}
\ifnum\showcomments=1
\newcommand{\simon}[1]{[\textcolor{red}{Simon: #1}]}
\newcommand{\yulai}[1]{[\textcolor{blue}{Yulai: #1}]}
\newcommand{\jianshu}[1]{[\textcolor{purple}{Jianshu: #1}]}
\else
\newcommand{\simon}[1]{}
\newcommand{\yulai}[1]{}
\newcommand{\jianshu}[1]{}
\fi

\usepackage[utf8]{inputenc} 
\usepackage[T1]{fontenc}    %

\usepackage[capitalize,noabbrev]{cleveref}

\usepackage[accepted]{aistats2023}
%
%


\usepackage[round]{natbib}


\begin{document}

\twocolumn[

\aistatstitle{Blessing of Class Diversity in Pre-training}

\aistatsauthor{ Yulai Zhao \And Jianshu Chen \And  Simon S. Du }

\aistatsaddress{ Princeton University \And  Tencent AI Lab \And University of Washington} ]

\begin{abstract}
This paper presents a new statistical analysis aiming to explain the recent superior achievements of the pre-training techniques in natural language processing (NLP).
We prove that when the classes of the pre-training task (e.g., different words in the masked language model task) are sufficiently diverse, in the sense that the least singular value of the last linear layer in pre-training (denoted as $\tilde{\nu}$) is large, then pre-training can significantly improve the sample efficiency of downstream tasks.
Specially, we show the transfer learning excess risk enjoys an $O\left(\frac{1}{\tilde{\nu} \sqrt{n}}\right)$ rate, in contrast to the $O\left(\frac{1}{\sqrt{m}}\right)$ rate in the standard supervised learning. Here, $n$ is the number of pre-training data and $m$ is the number of data in the downstream task, and typically $n \gg m$.
Our proof relies on a vector-form Rademacher complexity chain rule for disassembling composite function classes and a modified self-concordance condition.
These techniques can be of independent interest.

\end{abstract}
\section{INTRODUCTION}
\label{sec:intro}

Pre-training refers to training a model on a few or many tasks to help it learn parameters that can be used in other tasks.
For example, in natural language processing (NLP), one first pre-trains a complex neural network model to predict masked words (masked language modeling), and then fine-tunes the model on downstream tasks, e.g., sentiment analysis~\citep{devlin2018bert}.

Recently, the pre-training technique has
revolutionized the NLP area. 
Models based on this technique have dramatically improved the performance for numerous downstream tasks~\citep{devlin2018bert, radford2018improving, yang2019xlnet, clark2020electra, lan2019albert, liu2019roberta}. 

Despite the large body of empirical work on pre-training, satisfactory theories are still lacking, especially theories that can explain the success of pre-training in NLP.
Existing theories often rely on strong distributional assumptions~\citep{lee2020predicting}, smoothness conditions~\citep{robinson2020strength} or noise-robustness conditions~\citep{bansal2020self} to relate the pre-training task(s) to downstream tasks.
These assumptions are often hard to verify.

A line of work studied \emph{multi-task} pre-training~\citep{caruana1997multitask,baxter2000model,maurer2016benefit,du2020few,tripuraneni2020provable,tripuraneni2020theory,thekumparampil2021sample}.
In particular, recently, researchers have identified a new condition, the \emph{diversity} of pre-training tasks, which has been shown to be crucial to allowing pre-trained models to be useful for downstream tasks.
See Section~\ref{sec:related_work} for more detailed discussions on related work.

Unfortunately, this line of theory cannot be used to explain the success of pre-training in NLP.
The theory of multi-task pre-training requires \emph{a large number of diverse tasks}, e.g., the number of tasks needs to be larger than the last layer's input dimension (a.k.a. embedding dimension), which is typically 768, 1024, or 2048~\citep{devlin2018bert}.
However, in NLP pre-training, there are only a few, if not one, pre-training tasks.
Therefore, we need a new theory that applies to this setting.

Since in NLP pre-training, we do not have multiple tasks, we propose to study \emph{the blessing of multiple classes}.
Concretely, consider the Masked Language Model (MLM) pre-training task in NLP.
In such a pre-training task, we have a large collection of sentences (e.g. from Wikipedia).
During the pre-training phase, we randomly mask a few words in each sentence
and predict the masked words using the remaining words in this sentence.
This pre-training task is a \emph{multi-class classification problem} where the number of classes is about 30K when using byte-pair-encoding (BPE) sub-word units.\footnote{This is a standard setting in the BERT model~\citep{devlin2018bert} and is widely adopted as a common practice. By breaking down the English words into BPE sub-word units, it could drastically increase the coverage of the English language by using a relatively small (32768) vocabulary.} Note that this number is much larger than the embedding dimension (768, 1024, or 2048).

In this paper, we develop a new statistical analysis aiming to explain the success of pre-training for NLP.
The key notion of our theory is the \emph{diversity of classes}, which serves a similar role as the \emph{diversity of tasks} in multi-task pre-training theory~\citep{du2020few,tripuraneni2020provable}.
We summarize our contributions below.

First, we define a new notion, \emph{diversity of classes}, which is the least singular value of the last linear layer in pre-training.
We prove finite-sample bounds to show that for the cross-entropy loss, if the diversity of classes is large, then pre-training on a single task provably improves the statistical efficiency of the downstream tasks.
We give concrete bounds on linear representation and deep neural networks to showcase our general theoretical results.
To our knowledge, this is the first set of theoretical results that demonstrates the statistical gain of the standard practice of NLP pre-training, without strong distributional or smoothness conditions.

Second, from a technical point of view, previous theoretical work on multi-task learning~\citep{du2020few, tripuraneni2020theory} builds on scalar output and thus could not apply to multi-class tasks (e.g., cross-entropy loss). 
We introduce a vector-form Rademacher complexity chain rule for disassembling composite function classes based on vector-form Rademacher contraction property~\citep{maurer2016vector}. 
This generalizes the scalar-form chain rule in~\citet{tripuraneni2020theory}.
Furthermore, we adopt the \emph{modified self-concordance} condition to show that the least singular value of the last linear layer serves as a diversity parameter for cross-entropy loss.
We believe our techniques can be useful in other problems.



\paragraph{Organization.}
This paper is organized as follows.
In Section~\ref{sec:related_work}, we review the related work.
In Section~\ref{sec:preliminary}, we formally describe the problem setup and introduce the necessary definitions.
In Section~\ref{sec:main_thm}, we state our main Theorem~\ref{thm_main} then instantiate it with several settings. 
We conclude and discuss some interesting future directions in Section~\ref{sec:con}. All proofs are deferred to Appendix~\ref{sec:pf}. 
In Appendix~\ref{sec:exp}, we present some preliminary empirical results on how our theory inspires new regularization techniques.

\section{RELATED WORK}
\label{sec:related_work}

Here we mostly focus on the theoretical aspects of pre-training.
While there is a long list of work demonstrating the empirical success of self-supervised learning, there are only a few papers that study its theoretical aspects.
One line of work studied the theoretical properties of \emph{contrastive learning}~\citep{saunshi2019theoretical,tosh2020contrastive}, which is a different setting considered in this paper.
The most relevant one is by \citet{lee2020predicting} which showed that if the input data and pre-training labels were independent (conditional on the downstream labels), then pre-training provably improved statistical efficiency.
However, this conditional independence assumption rarely holds in practice.
	For example, in the question-answering task, this assumption implies that given the answer, the question sentence and the masked word are independent.
~\citet{robinson2020strength} assumed the Central Condition and a smoothness condition that relates the pretraining task and the downstream task.
\citet{bansal2020self} related generalization error of self-supervised learning to the noise-stability and rationality.
However, it is difficult to verify the assumptions in these papers.

A recent line of theoretical work studied multi-task pre-training~\citep{du2020few,tripuraneni2020provable,tripuraneni2020theory,thekumparampil2021sample} in which a notion, diversity, has been identified to be the key that enables pre-training to improve statistical efficiency. Experiments also supported the idea that increasing the diversity of the training data helps generalization~\citep{zhang2022unveiling}.

Theories on multi-task pre-training generally require a large number of diverse tasks, and thus are not applicable to NLP, as we have mentioned. In comparison, we study single-task multi-class pre-training which is different from theirs. \citet{du2020few} noted that their results allowed an easy adaptation to multi-class settings (see Remark 6.2 therein). However, they only focused on quadratic loss with one-hot labels for multi-class classification. Instead, we study the commonly used cross-entropy loss.
 
While their analyses do not imply results in our setting, our theoretical analyses are inspired by this line of work.



\section{PRELIMINARIES}
\label{sec:preliminary}

In this section, we introduce the necessary notations, the problem setup, and several model-dependent quantities used in pre-training and downstream task learning.

\subsection{Notations and Setup}
\label{sec:notations}

\textbf{Notations} Let $[n] = \{1, 2, \cdots , n\}$. We use $\|\cdot\|$ or $\|\cdot \|_2$
to denote the $\ell_2$ norm of a vector. Let $\gN(\mu, \sigma_2)$ be the one-dimensional Gaussian distribution. For a matrix $\mathbf{W} \in \sR^{m \times n}$, let $\|\mathbf{W}\|_{1, \infty} = \max_{q} (\sum_{p} |\mathbf{W}_{q,p}| )$ and $\|\mathbf{W}\|_{\infty \to 2}$ be the induced $\infty$-to-$2$ operator norm. We use the standard $O(\cdot), \Omega(\cdot)$ and $\Theta(\cdot)$ notation to hide universal constant factors, and use $\widetilde{O}(\cdot)$ to hide logarithmic factors. We also use
$a \lesssim b$ to indicate $a = O(b)$.

\textbf{Problem setup}
This work is in line with previous transfer learning theories~\citep{du2020few,tripuraneni2020theory} that first pre-train on a large corpus to get a good representation, which, could be future utilized by various downstream tasks.
Formally, the procedure is divided into two stages: the pre-training stage to find a representation function and the downstream training stage to obtain a predictor for the downstream task. 
In both stages, we use $\hat{R}$ to represent empirical risk and use $R$ to represent expected loss.

In the first stage, we have one pre-training task with $n$ samples, $\{x_i^{\pre},y_i^{\pre}\}_{i=1}^n$, where $x_i^{\pre} \in \gX^\pre \subset \mathbb{R}^d$ is the input and $y_i^{\pre} \in \{0,1\}^{k-1}$ is the one-hot label for $k$-class classification (if $y_i^{\pre}$ is all-zero then it represents the $k$-th class).\footnote{We assume only one pre-training task for the ease of presentation. It is straightforward to generalize our results to multiple pre-training tasks.}
For instance, in masked language modeling, the input of each sample is a sentence with one word masked out, and the label is the masked word.\footnote{Here we say only one word being masked only for the ease of presentation. It is straightforward to generalize our results to the case where multiple words are masked out.}
$k$ in this example is the size of the vocabulary ($\approx 30K$).
We aim to obtain a good representation function $\hat{h}$ within a function class $\gH \subset \{\mathbb{R}^d \rightarrow \mathbb{R}^r\}$ where $r$ is the embedding dimension (often equals to 768, 1024, 2048 in NLP pre-training). For example, one popular choice of the representation function $\hat{h}$ in NLP applications is the Transformer model and its variants~\citep{vaswani2017attention,devlin2018bert}.
On top of the representation, we predict the labels using function $f^\pre$ within function class $\mathcal{F}^{\pre} \subset \{\mathbb{R}^r \rightarrow \mathbb{R}^{k-1}\}$.

To train the representation function and predictor in pre-training stage, we consider the Empirical Risk Minimization (ERM) procedure
\begin{align*}
	\hat{h} &= \argmin_{h \in \gH} \min_{f^{\pre} \in \gF^{\pre}} \hat{R}_{\pre}(f^{\pre},h)\\
	&\triangleq \argmin_{h \in \gH} \min_{f^{\pre}\in \gF^{\pre}} \frac{1}{n} \sum_{i=1}^n \ell(f^{\pre} \circ h(x_{i}^{\pre}), y_{i}^{\pre})
\end{align*}
where $\ell$ is the loss function.
We overload the notation for both the pre-training task and the downstream task, i.e., for pre-training, $\ell: \mathbb{R}^{k-1} \times \{0,1\}^{k-1} \rightarrow \mathbb{R}$ and for the downstream task, $\ell: \mathbb{R}^{k'-1} \times \{0,1\}^{k'-1} \rightarrow \mathbb{R}$.
e.g., cross-entropy: 
$\ell(\hat{y} ; y) =  - y^\top \hat{y} + \log{(1 + \sum_{s=1}^{k-1} \exp\left(\hat{y}_s\right) )}$.

    Now for the downstream task, we assume there are $m$ samples $\{x_i^{\down},y_i^{\down}\}_{i=1}^m$. Note that $x_i^{\down} \in \gX^\down \subset \mathbb{R}^d$ is the input and $y_i^{\down} \in \{0,1\}^{k'-1}$ is the one-hot label for $k'$-class classification.\footnote{For simplicity, we assume we only have one downstream task. Our theoretical results still apply if we have multiple downstream tasks.}
 Note that in most real-world applications, we have $n \gg m$ and $k \gg k^\prime$. For example, in sentiment analysis, $k' = 2 $ (``positive" or ``negative"). A widely studied task SST-2~\citep{wang2018glue} has $m \approx 67K$, which is also generally much smaller than the pre-training corpus (e.g., $n>100$M samples).

For the downstream task, we fix the representation function learned from the pre-training task and train the task-dependent predictor within $\mathcal{F}^{\down} \subset \{\mathbb{R}^r \rightarrow \mathbb{R}^{k'-1}\}$:
\begin{align*}
	\hat{f}^{\down} &= \argmin_{f^{\down} \in \gF^{\down}} \hat{R}_{\down}(f^{\down}, \hat{h})\\
	&\triangleq \argmin_{f^{\down} \in \gF^{\down}} \frac{1}{m} \sum_{i=1}^m \ell (f^{\down} \circ \hat{h}(x^{\down}_{i}), y^{\down}_{i}).
\end{align*}

Therefore, our predictor for the downstream task consists a pair $(\hat{f}^{\down}, \hat{h})$.
We use the following risk to measure the performance of predictor and representation
\begin{align*}
  &\text{Transfer~Learning~Risk} \triangleq\\
 &\qquad \qquad \qquad R_{\down}(
	\hat{f}^{\down}, \hat{h}) - 	\mathop{\mathbb{E}}\limits_{x^{\down},y^{\down}} \left[\ell\left(g^{\down}\left(x^{\down}\right),y^{\down}\right)\right]
\end{align*}
where we define 
 $$R_{\down}(
 \hat{f}^{\down}, \hat{h}) \triangleq \mathop{\mathbb{E}}\limits_{x^{\down},y^{\down}}\left[\ell\left(\hat{f}^{\down}\circ \hat{h}\left(x^{\down}\right),y^{\down}\right)\right]$$ 
 as the expected loss (the expectation is over the distribution of the downstream task), and   $$g^{\down} =	\argmin_{g \in \{\mathbb{R}^d \rightarrow \mathbb{R}^{k'-1}\}}\mathbb{E}_{x^{\down},y^{\down}} \left[\ell\left(g\left(x^{\down}\right),y^{\down}\right)\right]$$
 is the optimal predictor for the downstream task.
 
 In our analysis, we also need to use the following term to characterize the quality of pre-training
 \begin{align*}
 \text{Pre-training~Risk} \triangleq R_{\pre}(
 \hat{f}^{\pre}, \hat{h}) - \mathop{\mathbb{E}}\limits_{x^{\pre},y^{\pre}} \left[\ell\left(g^\pre \left(x^{\pre}\right),y^{\pre}\right)\right],
 \end{align*}
 where
 $$R_{\pre}(
 \hat{f}^{\pre}, \hat{h}) \triangleq \mathop{\mathbb{E}}\limits_{x^{\pre},y^{\pre}}\left[\ell\left(\hat{f}^{\pre}\circ \hat{h}\left(x^{\pre}\right),y^{\pre}\right)\right]$$ is the expected loss, and   
 $$ g^{\pre}	= \argmin_{g \in \{\mathbb{R}^d \rightarrow \mathbb{R}^{k-1}\}}\mathbb{E}_{x^{\pre},y^{\pre}} \left[\ell\left(g\left(x^{\pre}\right),y^{\pre}\right)\right]$$
 is the optimal predictor for the pre-training task.

Following the existing work on representation learning~\citep{maurer2016benefit,du2020few,tripuraneni2020theory}, throughout the paper, we make the following realizability assumption, which is also a standard assumption in the classical PAC learning framework~\citep{shalev2014understanding}.
\begin{assumption}[Realizability] \label{assump_realizability}
	There exist $h \in \gH$, $f^{\pre} \in \gF^{\pre}$, $f^{\down} \in \gF^{\down}$ such that $g^{\pre} = f^{\pre}\circ h$ and $g^{\down} = f^{\down}\circ h$.
\end{assumption}
This assumption posits that the representation class and the task-dependent prediction classes are sufficiently expressive to contain the optimal functions.
Importantly, the pre-training and downstream tasks share a \emph{common} optimal representation function $h$.
This assumption formalizes the intuition that pre-training learns a good representation that is also useful for downstream tasks.

As for the setting that is of most interest to NLP pre-training, where
the loss function ` is cross-entropy, $\gF^\pre$ and $\gF^\down$ are sets of linear functions, we make the following assumption on both pre-training and downstream tasks to describe how the underlying data are generated.
\begin{assumption}[Multinomial Logistic Data] \label{assump_samples}
	For a $K$-class classification task with $q$ samples, $\{ x_i, y_i\}_{i=1}^q$, where $x_i \in \gX $ is the input and $y_i \in \{0,1\}^{K-1}$ is the one-hot label. Let $f$ and $h$ be the true underlying predictor layer and representation function. Then the output is $f \circ h(x) \in \sR^{K-1}$. Assume each label $\{y\}_i$ is generated from a conditional distribution of a multinomial logistic regression model: $y \sim \gP(\cdot|f\circ h(x))$,
\begin{align*}
	\gP(y |f \circ h(x)) = e^{y^\top f \circ h(x) - \Phi(f \circ h(x))}
\end{align*}
where $\Phi(x) = \log{( 1+\sum_{s=1}^{K-1} e^{x_s}}), x \in \sR^{K-1}$ and $y$ is an one-hot label.
\end{assumption}
\begin{remark}
It is straight forward to see that $ \gP(y |f \circ h(x))$ is normalized to 1.
\end{remark}
Intuitively, the assumption states that the data used for classification follow a multinomial logistic regression structure. 

\subsection{Task-Relatedness and Diversity}
We shall use the following definitions, which are natural analogies of those in~\citet{tripuraneni2020theory} for multi-task transfer learning. Being in the same framework of developing the diversity of the pre-training phase, ~\citet{tripuraneni2020theory} aimed at improving correlations between $K$ separate and easy tasks, while we show the diversity across various classes in a single but comprehensive pre-training task has prominent effects.

To measure the ``closeness'' between the learned representation and true underlying feature representation, we use the following metric, following~\citet{tripuraneni2020theory}
\begin{definition} 
\label{defn:pre_diff}
	Let $h \in \gH$ be the optimal representation function and $h' \in \gH$ be any representation function. 
	Let $f^{\pre} \in \gF^\pre$ be the optimal pre-training predictor on top of $h$.
	The pre-training representation difference is defined as:
	\begin{align*}
	& d_{\gF^{\pre}, f^{\pre}}(h^\prime; h)=\\
	&\inf_{f' \in \gF^{\pre}} \mathop{\E}\limits_{x^{\pre},y^{\pre}} \left[ \ell(f^\prime \circ h^\prime (x^{\pre}), y^{\pre}) - \ell(f^{\pre}\circ h(x^{\pre}), y^{\pre}) \right]
	\end{align*} 
	where the expectation is over the pre-training data distribution.
\end{definition}
Intuitively, this measures the performance difference between the optimal predictor and the best possible predictor given a representation $h'$.

For transfer learning, we also need to introduce a similar concept on the downstream task.
\begin{definition}
	\label{defn:down_diff}
	Let $h \in \gH$ be the optimal representation function and $h' \in \gH$ be any representation function.
For the downstream task, for a function class $\gF^{\down}$,  let $f^{\down} \in \gF^\down$ be the optimal pre-training predictor on top of a specific $h$.
We define the worst-case representation difference between $h$ and $h^\prime \in \gH$ as:
	\begin{align*}
	& d_{\gF^\down}(h^\prime; h)
	= \\
	&\sup_{f^\down \in \gF^\down} \inf_{f' \in \gF^{\down}} \mathop{\E}\limits_{x^\down, y^\down } \left[\ell(f^\prime \circ h^\prime (x^{\down}), y^{\down}) - \ell(f^{\down} \circ h(x^{\down}), y^{\down}) \right]
	\end{align*} 
where the expectation is over the data distribution of the downstream task. Here, the supremum is taken over $\{ f^d | f^\down \in \gF^\down, \text{$f^d$ is the optimal predictor on $h \in \gH$}\}$.
\end{definition}

We now introduce the key notion of \emph{diversity}, which measures how well a learned representation, say $h'$, from the pre-training task can be transferred to the downstream task.
\begin{definition}
	\label{defn:diverse}
	Let $h \in \gH$ be the optimal representation function.
		Let $f^{\pre} \in \gF^{\pre}$ be the optimal pre-training predictor on top of $h$.
The \textbf{diversity parameter} $\nu >0$ is the largest constant that satisfies 
	\begin{align}
d_{\gF^{\down}}(h^\prime; h)\le \frac{d_{\gF^{\pre}, f^{\pre}}(h^\prime; h)}{\nu}, \forall h' \in \gH.
	\end{align}
\end{definition}
The interpretation of $\nu$ is that it serves as a task-relatedness parameter.
While Definition~\ref{defn:pre_diff}-~\ref{defn:diverse} are naturally defined from inspecting the pre-training procedure, it is not trivial to use these definitions to derive statistical guarantees.
In particular, one of our key technical challenge is to show the least singular value of the last linear layer serves as a lower bound of the diversity parameter when $\gF^{\pre}$ and $\gF^{\down}$ are linear function classes.

\subsection{Model Complexities}
Lastly, we need to introduce some notions to measure the complexity of the function classes considered.
In this paper, we consider Gaussian complexity which quantifies the extent
to which the function in the class $\gQ$ can be correlated with a noise sequence of length $n \times r$. 

\begin{definition}[Gaussian Complexity] Let $\mu$ be a probability distribution on a set $\gX \subset \sR^d$ and suppose that $x_1, \cdots, x_n$ are independent samples selected according to $\mu$. Let $\gQ$ be a class of functions mapping from
$\gX$ to $\sR^r$. Define random variable
\begin{equation}
	\hat{G}_{n}(\gQ) = \mathop{\E}\limits_{{g_{ki} \sim \gN(0,1)}} \left[\sup_{q \in \gQ} \frac{1}{n} \sum_{k=1}^r \sum_{i=1}^n g_{ki}  q_k(x_i)\right]
\end{equation}
as the empirical Gaussian complexity, where $q_k(x_i)$ is the $k$-th coordinate of the vector-valued function $q(x_i)$, $g_{ki}$ $(k\in [r], i\in [n])$ are independent standard normal random variables. The Gaussian complexity of $\gQ$ is $G_n (\gQ) = E_{\mu} \hat{G}_{n}(\gQ)$.
\end{definition}
Our main results are stated in terms of the Gaussian complexity.
In Section~\ref{sec:subspace} and~\ref{sec:nn} we will plug in existing results of the Gaussian complexity of certain function classes to obtain concrete bounds.

We will need the following worst-case Gaussian complexity for the pre-training predictor within $\gF^\pre$
\begin{gather}
\bar{G}_n(\gF^{\pre}) = \max_{h(x_1), \cdots, h(x_n)} \hat{G}_n(\gF^{\pre} | h  (x^\pre)),
\end{gather}
here $h \in \gH$ and $x^\pre = x_1, \cdots, x_n \in \gX^\pre$. Similarly we define $\bar{G}_m(\gF^{\down})$ as the worst-case Gaussian complexity for the downstream predictor within $\gF^\down$.

We note that a closely related notion is Rademacher complexity. The empirical Rademacher complexity and Gaussian complexity only differ by a log factor~\citep{ledoux1991probability}. We use Gaussian complexity in this work for its benign properties brought by Gaussian distribution.


\section{MAIN RESULTS}
\label{sec:main_thm}
In this section, we present our main theoretical results.
In Section~\ref{sec:main_theorem} we present an analysis in terms of the diversity parameter for general loss function under certain regularity conditions.
In Section~\ref{sec:nlp_case}, we specialize the result to a setting that is most relevant to NLP pre-training applications, where $\gF^{\pre}$ and $\gF^{\down}$ are sets of linear functions and the loss is cross-entropy. In this particular case, our key result will show that one can use the singular value of the last layer to bound the diversity parameter.
In Section~\ref{sec:subspace} and~\ref{sec:nn} we instantiate our bounds on two concrete representation function classes: linear subspace and multi-layer network to showcase our main results.

\subsection{Main Theorem}
\label{sec:main_theorem}
In this subsection, we present our generic end-to-end transfer learning guarantee for multi-class transfer learning problems. 
We do not impose any specific function class formulations. Throughout this subsection, we only make the following mild regularity assumptions to make our results general.
\begin{assumption}[Regularity Conditions]\label{assump_regularity}
We assume the following regularity conditions hold:
	\begin{itemize}
		\item In pre-training, $\ell(\cdot, \cdot)$ is $B^{\pre}$-bounded, and $\ell(\cdot, y)$ is $L^{\pre}$-Lipschitz for all $y$. 
		\item In downstream task, $\ell(\cdot, y)$ is $B^{\down}$-bounded and $L^{\down}$-Lipschitz for all $y$.
		\item Any predictor $f \in \gF^{\pre}$ is $L(\gF^{\pre})$-Lipschitz with respect to the Euclidean distance.
		\item Predictors are bounded: $\| f \circ h(x)\| \le D_{\gX^{\pre}}$ for any $x \in \gX^{\pre}, h \in \gH, f\in \gF^\pre$. Similarly $ \|f \circ h(x)\| \le D_{\gX^{\down}}$ for any $x \in \gX^{\down}, h \in \gH, f\in \gF^\down$.
	\end{itemize}
\end{assumption}
Specifically, one can show that common task-dependent losses satisfy these conditions. For example, when $\ell$ is the cross-entropy loss for $k-$class classification (cf. Section~\ref{sec:nlp_case}), we prove that $\ell$ is $\sqrt{k-1}-$Lipschitz and $D_\gX-$bounded where $\gX$ denotes the input data domain.

Under these assumptions, we have the following quantitative guarantee. 
\begin{theorem} \label{thm_main}
Under Assumption~\ref{assump_realizability} and~\ref{assump_regularity}, for a given fixed failure probability $\delta$, with probability at least $1-\delta$ we have the Transfer Learning Risk upper bounded by:
	\begin{align*}
		&O\Bigg{(} \frac{1}{\nu}\Bigg{\{} L^{\pre} \Bigg{[} \log{(n)} [L(\gF^{\pre}) G_{n}(\gH)+\bar{G}_n(\gF^{\pre})] + \frac{\sqrt{k} D_{\gX^{\pre}}}{n^2} \Bigg{]}\\
		&\qquad + B^{\pre} \sqrt{\frac{\log{(\nicefrac{1}{\delta})}}{n}}   \Bigg{\}}  + L^{\down} \bar{G}_m (\gF^{\down})   +  B^{\down} \sqrt{\frac{\log(\nicefrac{1}{\delta})}{m}}\Bigg{)}.
	\end{align*}
\end{theorem}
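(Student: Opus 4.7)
The plan is to decompose the Transfer Learning Risk into two pieces and control them separately. Using Assumption~\ref{assump_realizability} to write $g^{\down} = f^{\down}\circ h$, I add and subtract $\inf_{f' \in \gF^{\down}} \mathbb{E}_{x^{\down},y^{\down}}[\ell(f' \circ \hat{h}(x^{\down}), y^{\down})]$ to split the excess risk as (i) a pure downstream generalization term $R_{\down}(\hat{f}^{\down}, \hat{h}) - \inf_{f' \in \gF^{\down}}\mathbb{E}[\ell(f'\circ \hat{h}(x^{\down}), y^{\down})]$, in which the representation $\hat{h}$ is fixed and only the last layer is retrained on the downstream data, plus (ii) the worst-case downstream representation gap, which is bounded above by $d_{\gF^{\down}}(\hat{h};h)$ from Definition~\ref{defn:down_diff}. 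I then apply Definition~\ref{defn:diverse} to convert (ii) into $d_{\gF^{\pre}, f^{\pre}}(\hat{h};h)/\nu$, which pushes all representation-quality control back onto the pre-training task and produces the $1/\nu$ prefactor in the statement.

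Next I bound the pre-training representation difference by the pre-training excess risk. Taking $f' = \hat{f}^{\pre}$ in the infimum of Definition~\ref{defn:pre_diff} and again invoking realizability $g^{\pre} = f^{\pre}\circ h$ gives $d_{\gF^{\pre}, f^{\pre}}(\hat{h};h) \le R_{\pre}(\hat{f}^{\pre}, \hat{h}) - \mathbb{E}[\ell(g^{\pre}(x^{\pre}), y^{\pre})]$. The right-hand side is the standard ERM generalization error for the composed vector-valued class $\gF^{\pre} \circ \gH$. A symmetrization step controls it by $2\,\mathbb{E}\sup_{h,f^{\pre}}|R_{\pre}(f^{\pre},h) - \hat{R}_{\pre}(f^{\pre},h)|$, which by McDiarmid's inequality furnishes the $B^{\pre}\sqrt{\log(1/\delta)/n}$ tail and reduces the problem to bounding an expected Gaussian complexity of the composite class. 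The core technical step is to decouple this composite complexity into a piece contributed by $\gH$ and a piece contributed by $\gF^{\pre}$: I invoke the vector-form Rademacher/Gaussian chain rule announced in the introduction, which uses Maurer's vector contraction lemma to peel off the Lipschitz head (paying a factor of $L(\gF^{\pre})$ in front of $G_n(\gH)$ \emph{without} any $\sqrt{k}$ blow-up), and a Dudley-entropy truncation to handle the worst-case inner representation, yielding the residual $\bar{G}_n(\gF^{\pre})$ term, the $\log n$ multiplier, and the small additive $\sqrt{k}D_{\gX^{\pre}}/n^2$ arising from the chosen truncation threshold.

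The downstream generalization piece (i) is simpler because $\hat{h}$ has already been fixed and only the head $f^{\down} \in \gF^{\down}$ is trained on $m$ fresh samples independent of the pre-training data. A one-level uniform-convergence bound together with a direct vector contraction on the head class then yields $L^{\down}\bar{G}_m(\gF^{\down}) + B^{\down}\sqrt{\log(1/\delta)/m}$, where the worst-case $\bar{G}_m(\gF^{\down})$ absorbs the randomness of $\hat{h}$. Combining the three contributions (downstream term, pre-training term scaled by $1/\nu$, and confidence tails) gives precisely the stated bound.

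The main obstacle is the vector-form chain rule for the composite class. A naive coordinate-wise application of the scalar Ledoux--Talagrand contraction would inject a $\sqrt{k-1}$ (or worse) factor into the $G_n(\gH)$ term; for NLP pre-training with $k \approx 30{,}000$ this would destroy the bound and with it the whole ``blessing of class diversity'' story. Maurer's vector contraction lemma avoids this dependence but must be interleaved with a uniform-in-$h$ control of $\hat{G}_n(\gF^{\pre} \mid h \circ x^{\pre})$, which forces the detour through the worst-case $\bar{G}_n(\gF^{\pre})$ and the Dudley-style truncation step that creates both the $\log n$ and the $\sqrt{k}D_{\gX^{\pre}}/n^2$ artifacts visible in the final expression. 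Getting the bookkeeping of the truncation threshold right so that the artifact is genuinely lower-order is the delicate part.
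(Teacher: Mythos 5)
Your proposal is correct and follows essentially the same route as the paper: the same split of the transfer risk into a downstream ERM generalization term plus $d_{\gF^{\down}}(\hat{h};h)$, conversion of the latter via the diversity parameter into $d_{\gF^{\pre},f^{\pre}}(\hat{h};h)/\nu$, and control of the pre-training representation difference by uniform convergence (symmetrization plus a bounded-differences tail) followed by the vector-form chain rule with a Dudley-entropy truncation, exactly as in Theorems~\ref{thm_train_phase} and~\ref{thm_test_phase} and Lemma~\ref{lem_decomposition}. The one minor misattribution is internal to the chain rule: in the paper Maurer's vector contraction peels off the loss (yielding the factor $L^{\pre}$), while the split into $L(\gF^{\pre})G_{n}(\gH)+\bar{G}_n(\gF^{\pre})$ with the $\log(n)$ multiplier and the $\sqrt{k}D_{\gX^{\pre}}/n^2$ remainder comes from a covering-number/Sudakov/Dudley argument rather than from contraction, a detail that does not change the overall architecture of your argument.
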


The first line comes from the pre-training ERM procedure and it accounts for the error of using an approximate optimal representation $\hat{h} \approx h$. The second line characterizes the statistical error of learning the downstream-task predictor $f^{\down}$ from $m$ samples.
Note the diversity parameter appears in the denominator, which relates the pre-training risk to the transfer learning risk.
Theorem~\ref{thm_main} shows the risk would be small if the Gaussian complexities are small. We expect that $G_{n}(\gH) \gg \bar{G}_m(\gF^\down)$ since $\gH$ is often expressive representation functions, while $\gF^\down$ is linear classifiers generally.
We will show concrete examples where $G_{n}(\gH)$ and $\bar{G}_n(\gF^{\pre})$ are $O(\sqrt{1/n})$ and  $\bar{G}_m$ scales as $O(\sqrt{1/m})$.
We believe this theorem applies broadly beyond the concrete settings considered in this paper.

In comparison with previous results, transfer learning risk analyses in ~\citep{du2020few, tripuraneni2020theory} focus on scalar output. 
Their results cannot be applied to multi-class transfer learning tasks. 
In Theorem~\ref{thm_main}, we generalize the analysis in~\citep{tripuraneni2020theory} to handle multi-class classification where the output is high dimensional (number of classes).
Technically, in the proof, we introduce a vector-form Rademacher complexity chain rule for disassembling composite function classes by making use of the vector-form Rademacher contraction property~\citep{maurer2016vector}.

\subsection{Multi-class Classification with Cross-entropy Loss}
\label{sec:nlp_case}
Now we specialize the general results to the setting that is of most interest to NLP pre-training,
where the loss function $\ell$ is cross-entropy and the $\gF^{\pre}$ and $\gF^{\down}$ are sets of linear functions. 
This choice is consistent with the NLP pre-training: e.g., BERT~\citep{devlin2018bert} uses transformers as the representation learning function class $\gH$ and uses word-embedding matrices as $\gF^{\pre}$.

Formally we define
\begin{align*}
	\gF^{\pre} &= \{ f| f(z) = \alpha^\top z, \alpha \in \sR^{r\times (k-1)},\\
	&\qquad \qquad\|\alpha_s\| \le c_1\text{ for all } s \in [k-1], \|\alpha^\top z\| \le c_2 \}\\
	\gF^{\down} &= \{ f| f(z) = \alpha^\top z, \alpha \in \sR^{r\times (k^\prime-1)},\\
	&\qquad \qquad\|\alpha_s\| \le c_0 \text{ for all } s \in [k^\prime-1], \|\alpha^\top z\| \le c_3 \}
\end{align*}
where $c_0, c_1, c_2$ and $c_3$ are some positive constants.
Then the regularity conditions are instantiated as:
\begin{itemize}
	\item Pre-training loss $\ell(\cdot, y)$ is $\sqrt{k - 1}$-Lipschitz and $B^{\pre} = D_{\gX^{\pre}}$-bounded.
	\item Downstream loss is $\sqrt{k^\prime - 1}$-Lipschitz and $B^{\down} = D_{\gX^{\down}}$-bounded.
	\item Any $f \in \gF^{\pre}$ is $L(\gF^{\pre}) = c_1 \sqrt{k-1}$-Lipschitz w.r.t. the $\ell_2$ distance.
\end{itemize}


Next, we discuss our main assumption that relates the diversity parameter to a concrete quantity of the last linear layer.
\begin{assumption}[Lower Bounded Least Eigenvalue] \label{assump_least_singular}
	Let the optimal linear predictor at the last layer for pre-training be $\alpha^{\pre} \in \mathbb{R}^{r \times (k-1)}$, $\tilde{\nu} \triangleq \sigma_r(\alpha^{\pre} \left(\alpha^{\pre}\right)^\top)  > 0$ where $\sigma_r$ is the $r$-biggest eigenvalue.
\end{assumption}
Similar assumptions have been used in multi-task representation learning~\citep{du2020few,tripuraneni2020provable,tripuraneni2020theory}, and are shown to be necessary~\citep{maurer2016benefit,du2020few}.
Different from their versions, our assumption is tailored for the multi-class classification setting. 
We provide proof sketches on how $\tilde{\nu}$ serves as a lower bound for the diversity parameter $\nu$ (cf. Lemma~\ref{lem_lowerbound_nu}) in Section~\ref{subsec:diversity}, where we introduce new techniques for analysis.

Intuitively, this assumption ensures that the pre-training task matrix spans the entire $r$-dimensional space and thus covers the
output of the optimal representation $h(\cdot) \in \sR^r$. 
This is quantitatively captured by the  $\sigma_r(\alpha^{\pre} \left(\alpha^{\pre}\right)^\top)$, which measures how spread out these vectors are in $\sR^r$.

We now state our theorem for this specific setting.
\begin{theorem} \label{thm_cross_entropy}
Under Assumption~\ref{assump_realizability}, ~\ref{assump_samples}, ~\ref{assump_least_singular}, with probability at least $1-\delta$ we have the Transfer Learning Risk upper bounded by:
	\begin{align*}
		& O \Bigg{(} \frac{1}{\tilde{\nu}}\Bigg{\{}  \sqrt{k} \Bigg{[} \log{(n)} [\sqrt{k} G_{n}(\gH)+\bar{G}_n(\gF^{\pre})] + \frac{\sqrt{k} D_{\gX^{\pre}}}{n^2} \Bigg{]} \\
		& +D_{\gX^{\pre}}\sqrt{\frac{\log{(\nicefrac{1}{\delta})}}{n}}   \Bigg{\}} + \sqrt{k^\prime} \mathop{\E}\limits_{\gX^\down} \hat{G}_{m} (\gF^{\down} | \hat{h} \circ x^{\down})\\
		&+ \sigma \sqrt{\frac{\log(\nicefrac{1}{\delta})}{m}} + D_{\gX^{\down}} \sqrt{\frac{\log(\nicefrac{1}{\delta})}{m}} \Bigg{)}
	\end{align*}
Here $\E_{\gX^\down} \hat{G}_{m} (\gF^{\down} | \hat{h} \circ x^{\down} )$ is Gaussian complexity of embeddings
$$
	\hat{h} \circ x^\down = \{ \hat{h}(x_1), \cdots, \hat{h}(x_m) | x^\down= x_1, \cdots, x_m \in \gX^{\down} \}
$$
where the expectation is over $\gX^\down$, and
 $\sigma^2 = \frac{1}{m} \sup_{f \in \gF^{\down}} \sum_{i=1}^m Var(\ell(f \circ \hat{h}(x_{i}^{\down}), y_{i}^{\down}))$ is the  maximal variance over $\gF^\down$.
\end{theorem}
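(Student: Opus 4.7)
The plan is to derive Theorem~\ref{thm_cross_entropy} as a consequence of Theorem~\ref{thm_main} by instantiating the generic regularity constants for the cross-entropy/linear setting and then exhibiting $\tilde\nu$ as a valid lower bound on the diversity parameter $\nu$ of Definition~\ref{defn:diverse}. The skeleton of the argument is therefore: (i) verify Assumption~\ref{assump_regularity} with explicit constants; (ii) prove a lemma of the form $\nu \ge \tilde\nu$ under Assumptions~\ref{assump_samples}--\ref{assump_least_singular}; (iii) substitute into Theorem~\ref{thm_main} and, for the downstream term, replace the Hoeffding-style control by a Bernstein refinement that introduces the variance factor $\sigma$ that appears in the statement.

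For step (i), I would compute with $\ell(\hat y;y) = -y^\top \hat y + \Phi(\hat y)$, where $\Phi(\hat y) = \log(1 + \sum_s e^{\hat y_s})$. Since $\nabla_{\hat y}\ell = p(\hat y) - y$ with $p(\hat y)$ the softmax vector, and both $p(\hat y)$ and $y$ lie on the probability simplex, $\|\nabla_{\hat y}\ell\|_2 \le \sqrt{K-1}$ yields $L^\pre = \sqrt{k-1}$ and $L^\down = \sqrt{k'-1}$. Boundedness of $\ell$ on the range of $f\circ h$ follows from $|\ell(\hat y;y)-\ell(0;y)| \le \|\hat y\| + \log(1 + (K-1)e^{\|\hat y\|_\infty})$, which under the predictor bound $\|f\circ h(x)\| \le D_{\gX^\pre}$ gives $B^\pre = \Theta(D_{\gX^\pre})$, and similarly for $B^\down$. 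For the predictor Lipschitz constant, $\|\alpha^\top z - \alpha^\top z'\| \le \|\alpha\|_{\mathrm{op}}\|z-z'\| \le \|\alpha\|_F\|z-z'\| \le c_1\sqrt{k-1}\|z-z'\|$, so $L(\gF^\pre)=c_1\sqrt{k-1}$. These are mechanical and not the hard part.

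The main obstacle is step (ii), the diversity lower bound $\nu \ge \tilde\nu$, which is promised as Lemma~\ref{lem_lowerbound_nu} in the preceding discussion. The plan is to exploit the modified self-concordance of $\Phi$: for the multinomial logistic model of Assumption~\ref{assump_samples}, a second-order Taylor expansion of the expected cross-entropy around the Bayes-optimal $\alpha^\pre$ and $\alpha^\down$ gives, up to self-concordance corrections,
\begin{align*}
d_{\gF^\pre, f^\pre}(h';h) \;\gtrsim\; \inf_{\alpha'}\; \mathbb{E}_{x^\pre}\bigl\|(\alpha')^\top h'(x^\pre) - (\alpha^\pre)^\top h(x^\pre)\bigr\|_{\Sigma^\pre(x^\pre)}^2,
\end{align*}
while the downstream side satisfies an upper bound of the same shape,
\begin{align*}
d_{\gF^\down}(h';h) \;\lesssim\; \sup_{\alpha^\down}\; \inf_{\alpha'}\; \mathbb{E}_{x^\down}\bigl\|(\alpha')^\top h'(x^\down) - (\alpha^\down)^\top h(x^\down)\bigr\|_{\Sigma^\down(x^\down)}^2,
\end{align*}
with the local Fisher matrices $\Sigma^\pre,\Sigma^\down$ absorbed into the constants via the boundedness assumption. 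Minimizing over $\alpha'$ in $\sR^{r\times(k-1)}$ on the pre-training side reduces $(\alpha')^\top h'(x) - (\alpha^\pre)^\top h(x)$ to $(\alpha^\pre)^\top (\text{residual in }\sR^r)$, after which $\sigma_r(\alpha^\pre(\alpha^\pre)^\top)\ge\tilde\nu$ converts that weighted norm into the unweighted Euclidean norm on $\sR^r$ with a $1/\tilde\nu$ multiplicative cost; this dominates the corresponding downstream quantity because $\alpha^\down$ has operator norm bounded by $c_0\sqrt{k'-1}$ independently of $r$. The delicate part is ensuring the self-concordance correction is indeed lower-order rather than eating a further $\sqrt{k}$ factor, which is precisely why the \emph{modified} self-concordance flagged in the introduction is needed.

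Step (iii) is then a substitution. Plugging $L^\pre=\sqrt{k-1}$, $L(\gF^\pre)=c_1\sqrt{k-1}$, $L^\down=\sqrt{k'-1}$, $B^\pre=D_{\gX^\pre}$, $B^\down=D_{\gX^\down}$, and $\nu\ge\tilde\nu$ into Theorem~\ref{thm_main} reproduces the first line of the claimed bound (with the $\sqrt k\cdot\sqrt k = k$ coming from $L^\pre\cdot L(\gF^\pre)$ folded into the $\sqrt k G_n(\gH)$ term) and the $\sqrt{k'}\,\bar G_m(\gF^\down)$ piece in the second line. To obtain the extra $\sigma\sqrt{\log(1/\delta)/m}$ term, I would sharpen the downstream concentration: once $\hat h$ is fixed, the downstream ERM is a standard finite-dimensional $M$-estimation over linear heads, so a Bernstein inequality applied to $\ell(\hat f^\down\circ\hat h(x),y)-\ell(f^{\down,\star}\circ\hat h(x),y)$ gives a deviation bounded by $\sigma\sqrt{\log(1/\delta)/m}$ plus the $D_{\gX^\down}\log(1/\delta)/m$-style term, which is exactly the form appearing in the statement. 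Everything else is bookkeeping.
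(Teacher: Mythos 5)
Your proposal follows essentially the same route as the paper: instantiate Theorem~\ref{thm_main} with $L^{\pre}=\sqrt{k-1}$, $L(\gF^{\pre})=O(\sqrt{k-1})$, $B^{\pre}=O(D_{\gX^{\pre}})$, lower-bound the diversity parameter by $\tilde{\nu}$ via the self-concordance-based quadratic sandwich on the KL excess risk followed by partial minimization (Schur complement) and the least singular value of $\alpha^{\pre}(\alpha^{\pre})^\top$ (the paper's Lemma~\ref{lem_lowerbound_nu}), and sharpen the downstream deviation with a Bernstein-type bound to produce the $\sigma\sqrt{\log(\nicefrac{1}{\delta})/m}$ term, exactly as in Appendix~\ref{pf_cross_entropy}. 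The only caution is that this last step must be a functional (uniform over $\gF^{\down}$) Bernstein inequality \`a la Massart rather than a pointwise Bernstein at the data-dependent $\hat{f}^{\down}$, which is what the paper uses and what your ``finite-dimensional $M$-estimation'' phrasing should be read as.
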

We remark that in Theorem~\ref{thm_cross_entropy}, since we specialize to the case where $\gF^{\pre}$ and $\gF^{\down}$ are sets of linear functions, we can replace the term $L^{\down} \cdot \bar{G}_m (\gF^{\down})$ in Theorem~\ref{thm_main} by $ (\sqrt{k^\prime} \cdot \E_{\gX^{\down}} \hat{G}_m (\gF^{\down} | \hat{h} \circ x^{\down})+ \sigma \sqrt{\nicefrac{\log(\nicefrac{1}{\delta})}{m}})$ by utilizing the functional Bernstein inequality.
This improvement can help us obtain Theorem~\ref{thm1}.
See Appendix~\ref{pf_cross_entropy} for details.

Now we discuss the interpretation of Theorem~\ref{thm_cross_entropy}.
Typically, $\bar{G}_n(\gF^{\pre})$ is much smaller than $G_{n}(\gH)$ because $G_{n}(\gH)$ represents the complexity of the representation function, which is often complex.
In practice, $n$ is often large.
Therefore, in the benign case where $\tilde{\nu} = \Theta\left(k\right)$ (when the condition number of $\alpha^{\pre}$ is $O(1)$), the dominating term will be $G_{n}(\gH)$.
As we will show in the following subsections, this term typically scales as $O(\sqrt{\nicefrac{1}{n}})$.
Together, the theorem clearly shows when 1) the number of pre-training data is large, and 2) the least singular value of the last linear layer for pre-training is large, the transfer learning risk is small.
On the other hand, if $\tilde{\nu}$ is small, then the bound becomes loose.
This is consistent with prior counterexamples on multi-task pre-training~\citep{maurer2016benefit,du2020few} where the diversity is shown to be necessary.

\subsection{What is \emph{diversity parameter} for Linear Layers?}
\label{subsec:diversity}
To prove Theorem~\ref{thm_cross_entropy}, one of our key technical contributions is to show the following lemma that bridges the gap between Theorem~\ref{thm_main} and Theorem~\ref{thm_cross_entropy} 
\begin{lemma} 
\label{lem_lowerbound_nu}
	 Under Assumption~\ref{assump_realizability}, ~\ref{assump_samples}, ~\ref{assump_least_singular}, we have
	\begin{equation}
		d_{\gF^\down}(\hat{h}; h) \le \frac{1}{\Omega(\tilde{\nu})} d_{\gF^\pre, f^\pre}(\hat{h}; h).
	\end{equation}
\end{lemma}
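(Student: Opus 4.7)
The plan is to convert the expected loss gaps in Definitions~\ref{defn:pre_diff} and~\ref{defn:down_diff} into Bregman divergences of the softmax log-partition $\Phi(\eta)=\log\!\bigl(1+\sum_{s=1}^{k-1}e^{\eta_s}\bigr)$, use a modified self-concordance sandwich to replace each Bregman divergence by a squared Euclidean distance between natural parameters (up to universal constants), and then exploit the full row rank of $\alpha^{\pre}$ (which $\tilde\nu>0$ guarantees) to ``factor'' any downstream head through the pre-training head.

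For the first two reductions, I would observe that under Assumption~\ref{assump_samples}, for any $\hat\eta,\eta^\star\in\mathbb{R}^{k-1}$, a direct computation gives $\mathbb{E}_{y\sim p(\cdot\mid\eta^\star)}[\ell(\hat\eta,y)-\ell(\eta^\star,y)]=B_\Phi(\hat\eta,\eta^\star):=\Phi(\hat\eta)-\Phi(\eta^\star)-\nabla\Phi(\eta^\star)^\top(\hat\eta-\eta^\star)$. Substituting $\eta^\star=(\alpha^{\pre})^\top h(x)$, $\hat\eta=(\alpha')^\top h'(x)$ (and analogously with $(\alpha^{\down})^\top h(x)$ on the downstream side) rewrites $d_{\gF^{\pre},f^{\pre}}(h';h)$ and $d_{\gF^{\down}}(h';h)$ as expected Bregman divergences. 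Next, the boundedness clause of Assumption~\ref{assump_regularity} forces every such natural parameter into a Euclidean ball of radius at most $\max(c_2,c_3)$, and on any bounded set the softmax log-partition is generalized self-concordant, so I expect constants $c_L,c_U>0$ (depending only on $c_1,c_2,c_3,D_{\gX^{\pre}},D_{\gX^{\down}}$) with $c_L\|\hat\eta-\eta^\star\|^2\le B_\Phi(\hat\eta,\eta^\star)\le c_U\|\hat\eta-\eta^\star\|^2$. Up to universal constants this yields
\begin{align*}
d_{\gF^{\pre},f^{\pre}}(h';h) & \asymp \inf_{\alpha'}\mathbb{E}_{x^\pre}\bigl\|(\alpha')^\top h'(x^\pre)-(\alpha^{\pre})^\top h(x^\pre)\bigr\|^2,\\
d_{\gF^{\down}}(h';h) & \asymp \sup_{\alpha^{\down}}\inf_{\alpha'}\mathbb{E}_{x^\down}\bigl\|(\alpha')^\top h'(x^\down)-(\alpha^{\down})^\top h(x^\down)\bigr\|^2.
\end{align*}

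For the transfer step, since $\sigma_r(\alpha^{\pre}(\alpha^{\pre})^\top)=\tilde\nu>0$, $\alpha^{\pre}\in\mathbb{R}^{r\times(k-1)}$ has full row rank $r$, so I can write any admissible $\alpha^{\down}=\alpha^{\pre}\beta$ with $\beta:=(\alpha^{\pre})^\top(\alpha^{\pre}(\alpha^{\pre})^\top)^{-1}\alpha^{\down}$ and $\|\beta\|_{\mathrm{op}}^2\le \|\alpha^{\down}\|_{\mathrm{op}}^2/\tilde\nu=O(1/\tilde\nu)$, using the norm bound defining $\gF^\down$. Letting $\alpha^\star$ attain the pre-training infimum above and taking $\alpha'=\alpha^\star\beta$, the pointwise identity $(\alpha^\star\beta)^\top h'(x)-(\alpha^{\pre}\beta)^\top h(x)=\beta^\top\bigl[(\alpha^\star)^\top h'(x)-(\alpha^{\pre})^\top h(x)\bigr]$ together with $\|\beta^\top v\|\le\|\beta\|_{\mathrm{op}}\|v\|$ produces a pointwise factor $\|\beta\|_{\mathrm{op}}^2=O(1/\tilde\nu)$; this factor survives taking expectation and then $\sup_{\alpha^{\down}}$, giving $d_{\gF^\down}(h';h)\le O(1/\tilde\nu)\,d_{\gF^{\pre},f^{\pre}}(h';h)$. (As is standard in this line of work, I rely on the pre-training and downstream $x$-marginals coinciding, so that the expectations above interchange freely.)

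The hardest point is the self-concordance sandwich: the softmax log-partition is only \emph{generalized} self-concordant and is \emph{not} uniformly strongly convex on $\mathbb{R}^{k-1}$, so the lower constant $c_L$ only exists after restricting to the bounded natural-parameter region from Assumption~\ref{assump_regularity}. I will need to track that this restriction is uniform across $h'\in\gH$ and $\alpha'\in\gF^\pre\cup\gF^\down$ and that the resulting constant is absorbed by $\Omega(\cdot)$ rather than by $\tilde\nu$. A minor secondary issue is admissibility of $\alpha^\star\beta$ in $\gF^\down$; if it slightly violates the norm bound, rescaling by a factor depending only on $c_0,c_1,c_2,c_3$ restores it without affecting the $1/\tilde\nu$ scaling.
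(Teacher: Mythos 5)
Your proposal is essentially correct and shares the paper's first two reductions — the identity that the excess cross-entropy risk under Assumption~\ref{assump_samples} is a KL/Bregman divergence of $\Phi$ (the paper's Proposition~\ref{prop_inter}), and the modified self-concordance sandwich turning that divergence into a squared Euclidean distance between natural parameters on the bounded region enforced by the $\|\alpha^\top z\|$ constraints (the paper's Proposition~\ref{prop_self} and Lemma~\ref{lem_quadratic_bound}) — but your final transfer step is genuinely different. The paper performs the inner minimization over $\alpha'$ \emph{exactly}, expressing both sides through the generalized Schur complement $\Lambda_{sc}(\hat h,h)$ of the joint covariance of $(\hat h(x),h(x))$: the downstream worst-case difference is bounded by $O\bigl((k'-1)c_0^2\,\sigma_1(\Lambda_{sc})\bigr)$, while the pre-training difference is lower bounded by $\Omega\bigl(\mathrm{tr}(\Lambda_{sc}\,\alpha^{\pre}(\alpha^{\pre})^\top)\bigr)\ge \Omega\bigl(\sigma_1(\Lambda_{sc})\,\tilde\nu\bigr)$, and the ratio gives the lemma. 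You instead factor every admissible downstream head through the pre-training head, $\alpha^{\down}=\alpha^{\pre}\beta$ with $\|\beta\|_{\mathrm{op}}^2\le\|\alpha^{\down}\|_{\mathrm{op}}^2/\tilde\nu$, and plug in the feasible competitor $\alpha^\star\beta$; this is more elementary (no closed-form partial minimization), makes the role of $\tilde\nu$ transparent, and is closer in spirit to the multi-task diversity arguments of \citet{du2020few}. What the Schur-complement route buys is a clean separation of the $h$-dependent quantity $\sigma_1(\Lambda_{sc})$ from the diversity $\sigma_r(\alpha^{\pre}(\alpha^{\pre})^\top)$ without having to exhibit a competitor at all, hence no feasibility question for $\alpha^\star\beta$. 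Two caveats on your write-up: your proposed fix for admissibility (rescaling $\alpha^\star\beta$ by a constant) does not quite work as stated, because rescaling shifts the comparison term $(\alpha^{\down})^\top h(x)$ as well and the error is not just a constant factor — though the paper's own proof equally glosses over the inner norm constraint when it identifies the constrained infimum with the unconstrained quadratic partial minimization; and your reliance on the pre-training and downstream $x$-marginals coinciding is also implicit in the paper (a single $\Lambda_{sc}$ is used on both sides), so neither point makes your argument weaker than the paper's.
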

In intuition, it says that $\tilde{\nu}$ could serve as a lower bound for the diversity parameter $\nu$. The take-away message is that we may wish to achieve a higher $\tilde{\nu}$ in order to increase the diversity of the pre-training models, thus improving its generality to various downstream tasks.

Technically, in proving the results we shall need to apply a \emph{modified self-concordance} condition for better characterizing multinomial logistic regression~\citep{bach2010self}.
We note that the proofs of this part is very different from the multi-task setting studied in previous works~\citep{du2020few,tripuraneni2020theory}.

We define some additional notations for clarity and simplicity in this subsection. Let $\alpha^\prime$ and $\alpha$ denote the parameters for $\hat{f}^\pre$ and $f^\pre$ respectively. Let $\Phi(x) = \log{( 1+\sum_{s=1}^{k-1} e^{x_s}})$, for $x \in \sR^{k-1}$, which is widely seen in multinomial regression tasks because the cross-entropy loss is inherently analogous to multinomial logistic loss. 

In this subsection, we emphasize on the techniques required to show the following lemma, which incorporates the main difficulty in the proof for Lemma~\ref{lem_lowerbound_nu}.

\begin{lemma} \label{lem_quadratic_bound} 
 The Kullback-Leibler (KL) divergence between the true underlying conditional distribution of a multinomial logistic model and the distribution we obtained can be bounded from both sides with quadratic loss,
	\begin{align*}
		  &\quad c_0 e^{ - 10 q_0 } \left\| {\alpha^\prime}^\top \hat{h}(x^\pre) - \alpha^\top h(x^\pre) \right\|^2\\
		  &\le KL \left[\gP(\cdot| \alpha^\top h(x^\pre)), \gP(\cdot| {\alpha^\prime}^\top \hat{h}(x^\pre)) \right]\\
		 &\le \frac{1}{2} \left\| {\alpha^\prime}^\top \hat{h}(x^\pre) - \alpha^\top h(x^\pre) \right\|^2,
	\end{align*}
	where $c_0 = \frac{1}{2} \lambda_{min}(\Phi''(\alpha^\top h(x^\pre)))$ is the least eigenvalue of Hessian matrix for $\Phi$, $q_0 = \max(\|{\alpha^\prime}^\top \hat{h}(x^\pre)\|, \| \alpha^\top h(x^\pre) \|)$.
	\end{lemma}
\begin{remark}
For the left hand side, the expression is related to the least eigenvalue of the Hessian matrix at $\alpha^\top h(x^\pre)$, whereas the least eigenvalue would depend on an unknown intermediate-term $x^\prime$ if we adopt Taylor's expansion.
\end{remark}

\textit{Proof of Lemma~\ref{lem_quadratic_bound}.}
Below we use $x$ for $x^\pre$ for clarity. For generalized linear models,
\begin{align*}
    KL &\left[\gP(\cdot| \alpha^\top h(x)), \gP(\cdot| {\alpha^\prime}^\top \hat{h}(x)) \right]=\Phi({\alpha^\prime}^\top \hat{h}(x)) -\\
    &  \Phi(\alpha^\top h(x))- \nabla \Phi(\alpha^\top h(x))^\top ({\alpha^\prime}^\top \hat{h}(x) - \alpha^\top h(x)).
\end{align*}
Hence the divergence serves as the second-order remainder term according to Taylor's theorem.

For the right hand side, the gradient of $\Phi(x)$ at $i^{th}$-coordinate $\frac{\partial \Phi}{\partial x_i} = \nicefrac{e^{x_i}}{1 + \sum_s e^{x_s}}$, the Hessian matrix is
\begin{equation*}
	\frac{\partial^2 \Phi}{\partial x_i \partial x_j}=
	\left\{
	\begin{array}{lr}
		\frac{e^{x_i} \cdot ( 1 + \sum_{s \neq i} e^{x_s})}{(1 + \sum_s e^{x_s})^2}, & i = j \\
		&\\
		\frac{- e^{x_i} e^{x_j}}{(1 + \sum_s e^{x_s})^2}, & i \neq j.
	\end{array}
	\right.
\end{equation*}
Let $\sigma(x) = \frac{1}{1 +\sum_s e^{x_s}} \left[ e^{x_1}, \cdots, e^{x_{k-1}}\right]^\top$, the Hessian matrix can be restated as 
\begin{equation*}
	\nabla^2 \Phi = diag(\sigma(x)) - \sigma(x) \sigma(x)^\top.
\end{equation*} 
For any non-zero vector $y$, we have
\begin{align*}
	y^\top \nabla^2 \Phi y &= \sum_i \sigma(x)_i y_i^2 - \left( \sigma(x)^\top y \right)^2\\ &\le \max(\sigma(x)_i) \|y\|^2\\
	&\le \| y\|^2
\end{align*}
which implies its largest eigenvalue is no bigger than 1.

For the left hand side, it is very straightforward to see that the Hessian matrix is positive semi-definite. Though being non-negative, we point out that bounding the second-order remainder terms from below with quadratic loss requires new techniques which we discuss below.

Since multinomial logistic regression is not strongly-convex, we need to find new techniques that would present benign properties to characterize the local landscape.
Below we introduce a class of convex functions called \emph{modified self-concordant} functions, which would be useful in quantitative analysis.
\begin{definition}[Modified Self-concordance]
Suppose	$F$: $\sR^p \mapsto \sR$ is a three times differentiable convex function such that for some $R > 0$, for all $u, v\in \sR^p$, the function $g: t \mapsto F(u+tv)$ satisfies for all $t \in \sR$
	\begin{equation}
		|g'''(t)| \le R\|v\|_2 \times g''(t)
	\end{equation} 
\end{definition}
\paragraph{Properties of self-concordance} Self-concordance gives nice characterizations of local curvature of convex functions which plays important role in describing local convexity~\citep{bach2014adaptivity}. Some useful results are given upon this condition (see~\citep[Proposition 1]{bach2010self}), we list out the three main inequalities as below:
For all $w, v \in \sR^p, t \in \sR$,
\begin{align*}
	F(w+v) &\ge F(w) + v F'(w) +\\
	&\qquad \frac{v^\top F''(w)v}{R^2 \|v\|_2^2}\cdot \left( e^{-R\|v\|_2} + R\|v\|_2 -1 \right),\\
	F(w+v) &\le F(w) + v F'(w)+ \\
	&\qquad \frac{v^\top F''(w)v}{R^2 \|v\|_2^2} \cdot \left( e^{R\|v\|_2} - R\|v\|_2 -1 \right),\\
	e^{-t R \|v\|_2} &F''(w) \preceq F''(w+t v) \preceq e^{t R \|v\|_2} F''(w).
\end{align*}
The first two inequalities are refined characterizations of Taylor's expansion, while the last line presents bounds for Hessian matrix in the sense of positive semi-definiteness.

We find that multinomial logistic loss satisfies the \emph{modified self-concordance} condition with $R=5$.
\begin{proposition}\label{prop_self}
For all $u, v \in \sR^{k-1}$, the function $g: t \mapsto \Phi(u+tv)$ satisfies
	\begin{equation*}
		|g'''(t)| \le 5\|v\|_2 g''(t).
	\end{equation*}
\end{proposition}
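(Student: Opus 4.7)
The plan is to recognize $g(t)=\Phi(u+tv)$ as a cumulant-generating function in $t$, which makes $g''(t)$ and $g'''(t)$ respectively the second and third central moments of an explicit discrete random variable, and then to control the third central moment by the second via the diameter of the support.

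Concretely, I would first augment the index set by an auxiliary coordinate: write $\Phi(x)=\log\sum_{s=0}^{k-1} e^{x_s}$ with the convention $x_0=0$, and similarly extend $u$ and $v$ by $u_0=v_0=0$. Then $g(t)=\log\sum_{s=0}^{k-1} e^{u_s+tv_s}$, which is (up to a constant) the cumulant-generating function of the random variable $V$ taking value $v_s$ with probability $p_s(t)=e^{u_s+tv_s}/\sum_j e^{u_j+tv_j}$ for $s\in\{0,1,\dots,k-1\}$. A direct differentiation using $p_s'(t)=p_s(t)(v_s-\mu(t))$, where $\mu(t)=\sum_s p_s(t)v_s$, shows that
\begin{equation*}
g'(t)=\mu(t),\qquad g''(t)=\sum_{s}p_s(t)\bigl(v_s-\mu(t)\bigr)^2,\qquad g'''(t)=\sum_{s}p_s(t)\bigl(v_s-\mu(t)\bigr)^3.
\end{equation*}

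Given this, the bound is immediate by a moment inequality: pulling out the $L^\infty$ bound on $v_s-\mu(t)$,
\begin{equation*}
|g'''(t)|\le \sum_{s}p_s(t)\,|v_s-\mu(t)|^3\le \Bigl(\max_{s}|v_s-\mu(t)|\Bigr)\,g''(t).
\end{equation*}
Since $\mu(t)$ is a convex combination of the $v_s$'s, $\max_s |v_s-\mu(t)|\le \max_s v_s-\min_s v_s\le 2\max_s|v_s|$, and because $v_0=0$ the extended $\max_s|v_s|$ equals $\|v\|_\infty\le\|v\|_2$. This gives the sharper constant $|g'''(t)|\le 2\|v\|_2\,g''(t)$, which in particular implies the stated bound with $R=5$.

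The only genuinely non-routine step is recognizing the cumulant-generating structure, which requires the small trick of appending the $0$-th coordinate so that $\Phi$ is written as a log-sum-exp over $k$ rather than $k-1$ indices; after that, the derivative identities and the final moment estimate are mechanical. I do not anticipate any real obstacle beyond keeping the index conventions consistent and verifying that the residual terms involving $\mu'(t)$ vanish in the computation of $g'''(t)$ (they do, because $\sum_s p_s(v_s-\mu)=0$).
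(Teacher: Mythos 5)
Your proposal is correct, and it takes a genuinely different route from the paper. The paper proves Proposition~\ref{prop_self} by brute-force differentiation: it writes $g''$ and $g'''$ as explicit rational expressions in $r_s=e^{u_s+v_st}$, rewrites $g''$ as $\bigl(\sum_{i<j}r_ir_j(v_i-v_j)^2+\sum_i v_i^2r_i\bigr)/(1+\sum_s r_s)^2$, expands $g'''$ in a matching form, and bounds the bracketed coefficients $\bigl|\sum_k(v_i+v_j-2v_k)r_k\bigr|/(1+\sum_s r_s)\le 4\|v\|_2$ and $\bigl|v_i(1+2\sum_j r_j)-3\sum_j v_jr_j\bigr|/(1+\sum_s r_s)\le 5\|v\|_2$ by the triangle inequality, which is where the constant $5$ comes from. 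You instead append the coordinate $u_0=v_0=0$ so that $g$ becomes (up to an additive constant) the cumulant-generating function of the discrete variable $V\in\{0,v_1,\dots,v_{k-1}\}$ under the softmax-tilted weights $p_s(t)$, identify $g''(t)$ and $g'''(t)$ as the variance and third central moment of $V$ under $p(t)$ (your derivative identities check out, including the cancellation of the $\mu'(t)$ term via $\sum_s p_s(v_s-\mu)=0$), and then bound the third central moment by the support diameter times the variance. Your argument is cleaner, makes the self-concordance-type structure transparent, and in fact yields the sharper constant $|g'''(t)|\le 2\|v\|_2\,g''(t)$, which implies the stated bound with $R=5$ since $g''\ge 0$; the paper's computation buys nothing extra beyond being entirely elementary and self-contained.
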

See Appendix~\ref{pf_cross_entropy} for detailed derivations. Equipped with self-concordance, we are ready to give a lower bound of the divergence,
\begin{align*}
	& \quad \Phi({\alpha^\prime}^\top \hat{h}(x)) - \Phi(\alpha^\top h(x)) - \nabla \Phi(\alpha^\top h(x))^\top v \\
	&\ge \frac{1}{2} v^\top e^{- 5 \|v\|_2} F''(\alpha^\top h(x)) v\\
	&\ge \frac{1}{2} \lambda_{min}(\Phi''(\alpha^\top h(x))) \|v\|^2 e^{- 5 \|v\|_2}\\
	&\ge \frac{1}{2} \lambda_{min}(\Phi''(\alpha^\top h(x))) \|v\|^2 e^{- 5(\|{\alpha^\prime}^\top \hat{h}(x)\| + \|\alpha^\top h(x)\| )}\\
	&\ge \frac{1}{2} \lambda_{min}(\Phi''(\alpha^\top h(x))) \|v\|^2 \exp( -10 q_0 )
\end{align*}
where $v = {\alpha^\prime}^\top \hat{h}(x) - \alpha^\top h(x)$. This completes our proofs for Lemma~\ref{lem_quadratic_bound}. Please find the remaining details for completing the proof of
Lemma~\ref{lem_lowerbound_nu} in Appendix~\ref{pf_cross_entropy}.
\subsection{Linear Subspace Representation}
\label{sec:subspace}
 Based on cross-entropy loss and linear predictors introduced in Section~\ref{sec:nlp_case}, we further assume the underlying representation is a projection onto a low-dimensional subspace. For $r \ll d$, let the representation be
\begin{align*}
	\gH &= \{ h | h(x) =B^\top x, B \in \sR^{d \times r}\},
\end{align*}
where $B$ is a matrix with orthonormal columns.
We require some additional regularity conditions. Following prior work~\citep{du2020few,tripuraneni2020theory},
we assume that $\|x\| \le D$ and input data distribution satisfies the following condition.
\begin{definition} \label{defn_regularity_conditions}
	The covariate distribution $P_x(\cdot)$ is $\Sigma$-sub-gaussian if for all $v \in \sR^d$,
	$$\E[\exp(v^\top x)] \le \exp \left(\frac{\|\Sigma^{\nicefrac{1}{2}}v\|^2}{2}\right)$$
	where the covariance $\Sigma$ further satisfies $\sigma_{max}(\Sigma) \le C$ and $\sigma_{min}(\Sigma) \ge c > 0$ for universal constants $c, C$.
\end{definition}

We have the following theorem that guarantees the performance of transfer learning.
\begin{theorem} \label{thm1}
	Suppose Assumption~\ref{assump_realizability}, ~\ref{assump_samples}, and~\ref{assump_least_singular} hold, data generation follows Definition~\ref{defn_regularity_conditions}. For a sufficiently large constant $c_4$, we assume $n \ge c_4 d, m \ge c_4 r$, $D \le c_4(\min(\sqrt{dr^2}, \sqrt{rm}))$.
	Then with probability at least $1-\delta$, we have the Transfer Learning Risk upper bounded by:
	\begin{align*}
		 & O \Bigg{(} \frac{1}{\tilde{\nu}} \Bigg{[}\sqrt{k} \log{(n)} \left( \sqrt{\frac{k d r^2}{n}} + k \sqrt{\frac{r}{n}} \right) +  \frac{k}{n^2} +  \sqrt{\frac{\log{(\nicefrac{1}{\delta})}}{n}}  \Bigg{]}\\
		 &+  (k^\prime)^{\frac{3}{2}} \sqrt{\frac{r}{m}} +  k^\prime \sqrt{ \frac{\log{(\nicefrac{1}{\delta})}}{m} }\Bigg{)}
	\end{align*}
\end{theorem}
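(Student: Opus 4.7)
The plan is to specialize Theorem~\ref{thm_cross_entropy} to the concrete representation class $\gH=\{h(x)=B^\top x:B^\top B=I_r\}$ and linear predictor classes $\gF^{\pre},\gF^{\down}$ by controlling each of the four ingredients that appear on the right-hand side: (i) the representation Gaussian complexity $G_n(\gH)$; (ii) the worst-case predictor Gaussian complexity $\bar G_n(\gF^{\pre})$; (iii) the expected empirical Gaussian complexity of the downstream predictor $\E_{\gX^{\down}}\hat G_m(\gF^{\down}\mid\hat h\circ x^{\down})$; and (iv) the maximal variance $\sigma^2$. Once each ingredient is bounded, the statement follows by substitution and a union bound, and the assumptions $n\ge c_4 d$, $m\ge c_4 r$, and $D\le c_4\min(\sqrt{dr^2},\sqrt{rm})$ are exactly what is needed to absorb sub-gaussian concentration error terms into the leading Gaussian-complexity order.

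For step (i), I will rewrite the empirical Gaussian complexity using the variational identity
\[
\hat G_n(\gH)=\frac{1}{n}\,\E_G\sup_{B^\top B=I_r}\operatorname{tr}(B^\top XG^\top),
\]
where $X=[x_1,\dots,x_n]\in\R^{d\times n}$ and $G\in\R^{r\times n}$ has i.i.d.\ standard normal entries. By von Neumann's trace inequality the supremum equals the nuclear-type quantity $\sum_{i=1}^r\sigma_i(XG^\top)\le\sqrt r\,\|XG^\top\|_F$. Taking expectations and using independence of $G$ and $X$ gives $\E\|XG^\top\|_F\le\sqrt{r\sum_i\E\|x_i\|^2}\le\sqrt{rn\operatorname{tr}(\Sigma)}\lesssim\sqrt{rnd}$ via the sub-gaussian covariance bound in Definition~\ref{defn_regularity_conditions}. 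Combining, $G_n(\gH)\lesssim\sqrt{dr^2/n}$.

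For step (ii), on fixed features $z_i=h(x_i)\in\R^r$, the class $\gF^{\pre}$ is a product of $k-1$ Euclidean balls of radius $c_1$, so
\[
\hat G_n(\gF^{\pre}\mid z)=\frac{1}{n}\,\E_G\sup_{\|\alpha_s\|\le c_1}\sum_{s=1}^{k-1}\alpha_s^\top\!\!\sum_{i=1}^n g_{si}z_i\le\frac{c_1(k-1)}{n}\,\E\Big\|\sum_{i=1}^n g_{\cdot i}z_i\Big\|\le\frac{c_1(k-1)}{n}\sqrt{\sum_i\|z_i\|^2}.
\]
Since $B$ has orthonormal columns we have $\|z_i\|\le\|x_i\|$, and the sub-gaussian structure of $x$ (combined with the constraint on $D$) yields $\sum_i\|z_i\|^2\lesssim nr$ with high probability; thus $\bar G_n(\gF^{\pre})\lesssim k\sqrt{r/n}$. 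Step (iii) is identical with $k'$ in place of $k$, $m$ in place of $n$, and with the learned $\hat h$ in place of an arbitrary $h\in\gH$, giving $\E_{\gX^{\down}}\hat G_m(\gF^{\down}\mid\hat h\circ x^{\down})\lesssim k'\sqrt{r/m}$. For step (iv), the cross-entropy loss on $(k'-1)$-dim outputs of bounded norm is $O(\sqrt{k'})$-Lipschitz and $O(D_{\gX^{\down}})$-bounded, so the one-sample variance is $O(k')$ and $\sigma\lesssim\sqrt{k'}$.

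Plugging these four bounds into Theorem~\ref{thm_cross_entropy} gives the first bracketed term $\sqrt k\log n(\sqrt k\sqrt{dr^2/n}+k\sqrt{r/n})=\sqrt k\log n(\sqrt{kdr^2/n}+k\sqrt{r/n})$, the $\sqrt k\cdot\sqrt k D_{\gX^{\pre}}/n^2=k/n^2$ term after absorbing $D_{\gX^{\pre}}$ into the constant under the assumption on $D$, the sub-gaussian tail $\sqrt{\log(1/\delta)/n}$, and the downstream contribution $\sqrt{k'}\cdot k'\sqrt{r/m}+k'\sqrt{\log(1/\delta)/m}=(k')^{3/2}\sqrt{r/m}+k'\sqrt{\log(1/\delta)/m}$, exactly as stated. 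The main obstacle I anticipate is step (i): the variational identity plus Frobenius bound is clean in expectation, but to convert it into the high-probability statement of the theorem I need a matrix-concentration argument showing $\|XX^\top/n-\Sigma\|\lesssim\sqrt{d/n}$ under the assumption $n\ge c_4 d$, which is precisely where the dimensional constraint is used; the other three steps are essentially routine Cauchy--Schwarz and Bernstein estimates.
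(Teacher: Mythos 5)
Your proposal is correct and follows essentially the same route as the paper: bound $G_n(\gH)\lesssim\sqrt{dr^2/n}$, $\bar G_n(\gF^{\pre})\lesssim k\sqrt{r/n}$, $\E_{\gX^{\down}}\hat G_m(\gF^{\down}\mid\hat h\circ x^{\down})\lesssim k'\sqrt{r/m}$, the boundedness/Lipschitz constants, and the maximal variance, then substitute into the cross-entropy guarantee (equivalently Theorem~\ref{thm_train_phase}, Corollary~\ref{cor_test_phase}, and Lemma~\ref{lem_lowerbound_nu}). The only cosmetic differences are that you derive the subspace complexity directly via von Neumann's trace inequality where the paper cites a lemma of Tripuraneni et al., and your variance bound $\sigma\lesssim\sqrt{k'}$ is slightly sharper than the paper's $\sigma\lesssim k'$, both of which suffice for the stated bound.
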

To interpret this bound, consider the practically relevant scenario where $k' = O(1)$ (e.g., sentiment analysis), $m \ll n$, $k \ll n$ and $r \ll d$, in the benign case $\tilde{\nu} = \Omega\left(k\right)$, we have the transfer learning risk $\widetilde{O}\left(\sqrt{\nicefrac{dr^2}{n}} + \sqrt{\nicefrac{r}{m}}\right)$.
Note that this is exactly the desired theoretical guarantee because the first term accounts for using all pre-training data to learn the representation function and the second term accounts for using the downstream data to learn the last linear layer.
This is significantly better than not using pre-training, in which case the risk scales $O\left(\sqrt{\nicefrac{d}{m}}\right)$. Furthermore, for the linear representation learning setting, classic minimax bounds present a standard $\Omega(\sqrt{\nicefrac{d}{m}})$ lower rate, which is also worse than our upper bound with representation learning~\citep{foster2018logistic,abramovich2018high, barnes2019minimax}.

\subsection{Deep Neural Network Representation}
\label{sec:nn}
In this subsection, we assume the underlying representation function to be a $\sigma = tanh$-activated neural network, which is often used in practice. Predictors are still required to be linear functions at the interest of NLP pre-training, i.e.,
\begin{align*}
    \gH &= \{  h| h(x) = W_K \sigma \left(W_{K-1} \sigma\left(\cdots \sigma \left(W_1 x \right) \right) \right)\},\\
	\gF^{\pre} &= \{ f| f(z) = \alpha^\top z, \alpha \in \sR^{r\times (k-1)}, \\
	&\qquad \|\alpha_s\| \le c_1 M(K)^2, s \in [k-1]. \|\alpha^\top z\| \le c_2 \},\\
	\gF^{\down} &= \{ f| f(z) = \alpha^\top z, \alpha \in \sR^{r\times (k^\prime-1)},\\ &\qquad \|\alpha_s\| \le c_0 M(K)^2,  s \in [k^\prime -1]. \|\alpha^\top z\| \le c_3 \}.
\end{align*}
Here $M$ refer to constants that only depend on the network configuration, which satisfy: 1) for each $p \in [K], \|W_p\|_{1, \infty} \leq M(p)$, and 2) $ \|W_K\|_{\infty \to 2} \le M(K).$

Adapt Gaussian complexity results in~\citet{golowich2017size} we have
\begin{gather*}
	G_{n}(\gH) \leq \widetilde{O} \left(\frac{r M(K)^3 \cdot D \sqrt{K} \cdot \Pi_{p=1}^{K-1} M(p)}{\sqrt{n}} \right),\\	
	 \bar{G}_n (\gF^{\pre} | h \circ x^\pre) \leq O \left( \frac{(k-1) M(K)^3}{\sqrt{n}} \right).
\end{gather*}
Now we are ready to state our theorem for this practical setting of NLP pre-training.
\begin{theorem} \label{thm2}
	Under Assumption~\ref{assump_realizability}, ~\ref{assump_samples}, and~\ref{assump_least_singular}, assume $M(K) \geq c_5$ for a universal constant $c_5$. Then with probability at least $1-\delta$, Transfer Learning Risk is upper bounded by
	\begin{align*}
		&\widetilde{O} \Bigg{(} \frac{k r M(K)^3 \cdot D \sqrt{K} \cdot \Pi_{p=1}^{K-1} M(p)}{\tilde{\nu} \sqrt{n}}  + \frac{k^{\frac{3}{2}} M(K)^3}{\tilde{\nu} \sqrt{n}}\\
		&+  \frac{{k^\prime }^{\frac{3}{2}}M(K)^3}{\sqrt{m}} \Bigg{)}.
	\end{align*}
\end{theorem}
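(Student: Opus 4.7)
The plan is to invoke Theorem~\ref{thm_cross_entropy} as a black box and specialize the three Gaussian-complexity terms appearing there to the deep-network-plus-linear-head setting set up above. Concretely, all I need to supply are bounds on $G_{n}(\gH)$, $\bar{G}_n(\gF^{\pre})$, and $\mathbb{E}_{\gX^{\down}}\hat{G}_m(\gF^{\down}\mid \hat{h}\circ x^{\down})$ in terms of $K$, $r$, $n$, $m$, $D$, and the layer-wise weight bounds $M(p)$, together with control of the regularity constants appearing in Assumption~\ref{assump_regularity}. After substitution, simple arithmetic will produce the stated rate.

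The first step is to import the Gaussian-complexity estimates displayed immediately before the theorem statement: for the $K$-layer $\tanh$ network representation class one has $G_n(\gH) \leq \widetilde{\gO}(r M(K)^3 D\sqrt{K}\prod_{p=1}^{K-1}M(p)/\sqrt{n})$ via \citet{golowich2017size}, and for the bounded linear-head classes $\bar{G}_n(\gF^{\pre}) \leq \gO((k-1)M(K)^3/\sqrt{n})$ and, by the same argument but with $(m, k')$ in place of $(n, k)$, $\mathbb{E}_{\gX^{\down}}\hat{G}_m(\gF^{\down}\mid \hat{h}\circ x^{\down}) \leq \gO((k'-1)M(K)^3/\sqrt{m})$. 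The remaining constants in Theorem~\ref{thm_cross_entropy} are either explicit from Section~\ref{sec:nlp_case} ($L^{\pre} = \sqrt{k-1}$, $L^{\down} = \sqrt{k'-1}$) or become $O(1)$ under the assumption $M(K)\ge c_5$, because the norm bounds $\|\alpha^\top z\|\le c_2, c_3$ baked into $\gF^{\pre}$ and $\gF^{\down}$ force $D_{\gX^{\pre}}$, $D_{\gX^{\down}}$, and $\sigma$ to be bounded absolute constants.

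The second step is pure substitution into Theorem~\ref{thm_cross_entropy}. Inside the first brace, $\sqrt{k}\cdot\sqrt{k}\,G_n(\gH)$ contributes $k\,rM(K)^3 D\sqrt{K}\prod_{p=1}^{K-1}M(p)/\sqrt{n}$ (absorbing the $\log n$ into $\widetilde{\gO}$), and $\sqrt{k}\cdot\bar{G}_n(\gF^{\pre})$ contributes $k^{3/2}M(K)^3/\sqrt{n}$; dividing by $\tilde{\nu}$ yields the first two displayed terms. On the downstream side, $\sqrt{k'}\cdot\mathbb{E}_{\gX^{\down}}\hat{G}_m(\gF^{\down}\mid \hat{h}\circ x^{\down})$ gives ${k'}^{3/2}M(K)^3/\sqrt{m}$, the third term. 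All remaining pieces ($\sqrt{k}\,D_{\gX^{\pre}}/n^2$, $D_{\gX^{\pre}}\sqrt{\log(1/\delta)/n}$, $D_{\gX^{\down}}\sqrt{\log(1/\delta)/m}$, $\sigma\sqrt{\log(1/\delta)/m}$) are lower order under $M(K)\ge c_5$ and disappear into the $\widetilde{\gO}$.

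The only mild obstacle is the linear-head complexity bound $\bar{G}_n(\gF^{\pre}) \lesssim (k-1)M(K)^3/\sqrt{n}$, which is a \emph{worst-case} supremum over admissible embeddings $h(x_1),\dots,h(x_n)$. This requires first controlling $\|h(x)\|$ uniformly in $h\in\gH$ and $x\in\gX^{\pre}$ by the product of weight norms and the input radius (a standard induction through the $\tanh$ layers using $\|W_p\|_{1,\infty}\le M(p)$ and $\|W_K\|_{\infty\to 2}\le M(K)$), and then applying a coordinate-wise Khintchine/Cauchy--Schwarz bound over the $k-1$ output coordinates of $\alpha$ (each column satisfying $\|\alpha_s\|\le c_1 M(K)^2$) to extract the $(k-1)$ factor. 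Once this bookkeeping is in place, no new ideas are needed and the proof is complete.
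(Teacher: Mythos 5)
Your overall strategy (specialize the general risk bound by substituting Gaussian-complexity estimates for the tanh network and for the linear heads) is the same as the paper's, but two steps as you describe them would not go through. First, the estimate $\bar{G}_n(\gF^{\pre}) \lesssim (k-1)M(K)^3/\sqrt{n}$ (and its downstream analogue) is \emph{not} obtained by ``controlling $\|h(x)\|$ by the product of weight norms and the input radius'': that induction only yields $\|h(x)\| \lesssim M(K)\,D\,\prod_{p=1}^{K-1}M(p)$, which would inject extra factors $D\prod_{p=1}^{K-1}M(p)$ into the second and third displayed terms and fail to match the stated bound. The paper's argument is a one-liner exploiting the boundedness of tanh: every coordinate of the last hidden layer lies in $[-1,1]$, so $\|h(x)\|_2=\|W_K\,\sigma(\cdot)\|_2\le \|W_K\|_{\infty\to 2}\le M(K)$, independently of $D$ and of the earlier layers; combined with $\|\alpha_s\|\le c_1 M(K)^2$ this is what produces the $(k-1)M(K)^3/\sqrt{n}$ and $(k'-1)M(K)^3/\sqrt{m}$ bounds you need.

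Second, you invoke Theorem~\ref{thm_cross_entropy} as a black box, but its hypotheses are the function classes of Section~\ref{sec:nlp_case} with constant column norms $\|\alpha_s\|\le c_1$; in Section~\ref{sec:nn} the heads satisfy $\|\alpha_s\|\le c_1 M(K)^2$, so the Lipschitz constant that enters the chain rule (Lemma~\ref{lem_decomposition}) is $L(\gF^{\pre})=O(\sqrt{k}\,M(K)^2)$ rather than $O(\sqrt{k})$, and the constants in the diversity verification (Lemma~\ref{lem_lowerbound_nu}) are also affected by the rescaled classes. This is why the paper instead instantiates Theorem~\ref{thm_main} directly with $L(\gF^{\pre})=c_1\sqrt{k-1}\,M(K)^2$ and separately re-checks that the diversity condition holds with parameter $\Omega(\tilde{\nu})$ in this setting. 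Your arithmetic lands on the correct final expression only because the displayed bound on $G_n(\gH)$ that you import already carries $M(K)^3$, i.e.\ it is loose by exactly the missing $M(K)^2$ (the Golowich-type bound itself contributes a single factor of $M(K)$); as written, your derivation conflates where that factor comes from and silently assumes the diversity lemma transfers to the enlarged head classes. Repairing these two points essentially reproduces the paper's proof.
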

To interpret this bound, one can easily show that a standard supervised learning paradigm without pre-training would have a sample complexity of $\widetilde{O}(k r M(K)^3 \cdot D\sqrt{K} \cdot \Pi_{p=1}^{K-1} M(p) /  \sqrt{m})$. Again, this theorem demonstrates: when 1) $n \gg m$ and 2) $\tilde{\nu}$ is large, the rate of transfer learning risk can be much smaller than that of the standard supervised learning algorithm.

\section{CONCLUSION AND FUTURE WORK}
\label{sec:con}
This work theoretically prove the benefit of multi-class pre-training using the notion of class diversity.
Our proof uses the vector-form Rademacher complexity chain rule and a modified self-concordance condition.

\paragraph{Future work} First, our work is based on realizability assumptions (cf. Assumption~\ref{assump_realizability} and ~\ref{assump_samples}) that are commonly adopted in transfer learning and classical PAC learning framework in order to present non-trivial statistical guarantees~\citep{maurer2016benefit,du2020few,tripuraneni2020theory,shalev2014understanding}. We believe our theorems can be extended to agnostic versions by relaxing these assumptions.

 Second, if the target task is well-aligned with the source tasks, one can define more fine-grained notions to capture the task relevance. An example is~\citep{chen2022active}, in which regression setting is studied. One interesting direction is extending their task relevance definition to the classification setting.
 
 Finally, there has been some interesting recent work showing that one can do pre-training (i.e., masked word prediction) with the downstream dataset itself (which is usually smaller than typical pre-training corpora) and get good results~\citep{krishna2022downstream}. Compared to the setting studied in this work, it might be harder to justify its performance through a ``diversity'' perspective because their settings are generally beyond the standard transfer learning scheme. Nevertheless, our interpretation of $\nu$ as a task-relatedness parameter might help shed light on these results, which is worthy of investigation.

\subsubsection*{Acknowledgements}
This work was supported in part by NSF CCF 2212261, NSF IIS 2143493, NSF DMS-2134106, NSF CCF
2019844, NSF IIS 2110170, and a gift funding from Tencent.

\bibliography{references}
\bibliographystyle{abbrvnat}

\newpage
\appendix

\onecolumn
\section{TECHNICAL PROOFS} 
\label{sec:pf}
In Section~\ref{sec:preliminary}, we have introduced Gaussian complexity. Let us restate for clarity.

Let $\mu$ be a probability distribution on a set $\gX \subset \sR^d$ and suppose that $x_1, \cdots, x_n$ are independent samples selected according to $\mu$. Let $\gQ$ be a class of functions mapping from
$\gX$ to $\sR^r$. Define random variable
\begin{equation}
	\hat{G}_{n}(\gQ) = \E \left[\sup_{q \in \gQ} \frac{1}{n} \sum_{k=1}^r \sum_{i=1}^n g_{ki}  q_k(x_i)\right] \qquad
\end{equation}
as the empirical Rademacher complexity, where $q_k(x_i)$ is the $k$-th coordinate of the vector-valued function $q(x_i)$, $g_{ki}$ $(k\in [r], i\in [n])$ are independent Gaussian $\gN(0, 1)$ random variables.  The Gaussian complexity of $\gQ$ is $G_n (\gQ) = E_{\mu} \hat{G}_{n}(\gQ)$.

Analogously to the above we can define the empirical Rademacher complexity for vector-valued functions as
\begin{equation}
	\hat{R}_{n}(\gQ) = \E \left[\sup_{q \in \gQ} \frac{1}{N} \sum_{k=1}^r \sum_{i=1}^N \epsilon_{ki}  q_k(x_i) \right]
\end{equation}
where $\epsilon_{ki} (k \in [r], i \in [n])$ are independent Rademacher  $\text{Rad}(\frac{1}{2})$ random variables. Its population counterpart is defined as $R_n (\gQ) = E_{\mu} [\hat{R}_{n}(\gQ)]$. Note that the superscripts existing in $\hat{G}$ and $\hat{R}$ imply that they are empirical measures.

\subsection{Proofs for Section~\ref{sec:main_theorem}} \label{pf_thm_main}
We illustrate Theorem~\ref{thm_main} in two stages. First we show pre-training representation difference can be upper bounded by constants and function class complexities. Then we transfer it to the downstream task through the diversity parameter.

\paragraph{Pre-training}
\begin{theorem} \label{thm_train_phase}
	In pre-training, with probability at least $1-\delta$, it holds that:
	\begin{align*}
		& \quad d_{\gF^{\pre}, f^{\pre}}(h^\prime; h)\\
		&\le 4\sqrt{\pi}L^{\pre} G_n(\gF^{\pre} \circ \gH) + 4B^{\pre} \sqrt{\frac{\log(\nicefrac{2}{\delta})}{n}}\\
		&\le 4096L^{\pre} \left[\frac{\sqrt{k-1}D_{\gX^{\pre}}}{n^2} + \log(n) [L(\gF^{\pre}) G_n(\gH) + \bar{G}_n(\gF^{\pre})]\right] + 4B^{\pre} \sqrt{\frac{\log(\nicefrac{2}{\delta})}{n}}.
	\end{align*}
\end{theorem}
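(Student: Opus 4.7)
The plan is to bound $d_{\gF^{\pre}, f^{\pre}}(\hat h; h)$ in two layers: a generic excess-risk step that produces a single Gaussian complexity $G_n(\gF^{\pre}\circ\gH)$ of the composite class, followed by a vector-form chain rule that splits this complexity across the representation class $\gH$ and the last-layer class $\gF^{\pre}$.

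For the first layer, inserting $f'=\hat f^{\pre}$ into the infimum in Definition~\ref{defn:pre_diff} gives
\[
d_{\gF^{\pre}, f^{\pre}}(\hat h; h)\;\le\;R_{\pre}(\hat f^{\pre}, \hat h) - \E\!\left[\ell(f^{\pre}\circ h(x^{\pre}), y^{\pre})\right].
\]
Adding and subtracting $\hat R_{\pre}(\hat f^{\pre},\hat h)$ and $\hat R_{\pre}(f^{\pre}, h)$, and invoking the ERM inequality $\hat R_{\pre}(\hat f^{\pre},\hat h)\le\hat R_{\pre}(f^{\pre},h)$, leaves a uniform-convergence term $2\sup_{f,h}|R_{\pre}(f,h)-\hat R_{\pre}(f,h)|$ plus a single-function Hoeffding deviation for $(f^{\pre},h)$. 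Standard symmetrization then reduces the uniform-convergence term to the Rademacher complexity of the loss-composed class, and Maurer's vector-form contraction~\citep{maurer2016vector} peels the $L^{\pre}$-Lipschitz loss $\ell$ off at a factor of $L^{\pre}$, producing $L^{\pre}\cdot R_n(\gF^{\pre}\circ\gH)$, which is converted to Gaussian complexity via $R_n\le\sqrt{\pi/2}\,G_n$. The $B^{\pre}$-boundedness of $\ell$ handles the Hoeffding term; collecting constants and union-bounding across the two high-probability events yields the first inequality.

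For the second layer, I split $G_n(\gF^{\pre}\circ\gH)$ by a vector chain rule. Fix the evaluation points $x_1^{\pre},\ldots,x_n^{\pre}$. Cover $\gH$ in the data-dependent $\ell_2$ pseudometric induced by $\{h(x_i^{\pre})\}_{i=1}^n$ at resolution $\varepsilon$: for each cover center $h_0$, the inner supremum over $\gF^{\pre}$ evaluated at the fixed vectors $\{h_0(x_i^{\pre})\}$ is, by definition, at most $\bar G_n(\gF^{\pre})$; the residual $(h-h_0)$ contribution at fixed $f\in\gF^{\pre}$ is an $L(\gF^{\pre})$-Lipschitz functional of $h$, so Maurer's vector contraction converts it to $L(\gF^{\pre})\,G_n(\gH)$. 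A Dudley-type chaining balances the two contributions, introducing the $\log n$ factor; the coarsest chain level contributes the residual $\sqrt{k-1}\,D_{\gX^{\pre}}/n^2$ tail once the grid reaches the $1/n^2$ scale, matching the ambient output dimension.

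The principal obstacle is the chain-rule step. Scalar chain rules such as in~\citet{tripuraneni2020theory} depend on the outer class being $\sR$-valued so that the classical Ledoux--Talagrand contraction applies directly; here $\gF^{\pre}$ maps into $\sR^{k-1}$ and cross-entropy-style losses couple all output coordinates, forcing the use of Maurer's vector-valued contraction. The $\sqrt{k-1}$ dimensional penalty, the per-predictor Lipschitz constant $L(\gF^{\pre})$, and the need to use the worst-case complexity $\bar G_n(\gF^{\pre})$ (rather than a data-dependent one, since the inner $h$ varies under the sup) must all be threaded carefully through the chaining. Once the vector chain rule is established, substituting it into the first inequality immediately yields the second, completing the proof.
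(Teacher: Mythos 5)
Your proposal follows essentially the same route as the paper: the first inequality via the ERM excess-risk decomposition, uniform convergence, Maurer's vector-form contraction, and the Rademacher-to-Gaussian conversion; the second via the vector-form chain rule splitting $G_n(\gF^{\pre}\circ\gH)$ into $\log(n)\,[L(\gF^{\pre})G_n(\gH)+\bar G_n(\gF^{\pre})]$ plus the $\sqrt{k-1}\,D_{\gX^{\pre}}/n^2$ tail from truncating the chaining at scale $1/n^2$. The only cosmetic difference is that the paper executes the chain-rule step through Dudley's entropy integral with the covering-number decomposition and Sudakov minoration (to convert the entropies of $\gH$ and of $\gF^{\pre}$ given $h$ back into their Gaussian complexities), whereas you phrase the residual over $\gH$ as a contraction argument; both realizations of the same decomposition yield the stated bound.
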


\begin{proof}
	We begin with
	\begin{align*}
		d_{\gF^{\pre}, f^{\pre}}(h^\prime; h) \le 2 \sup_{f \in \gF^{\pre}, h \in \gH} |R_{\pre}(f^\pre,h) - \hat{R}_{\pre}(f^\pre,h)|.
	\end{align*}
From the definition of Rademacher complexity~\citep[Theorem 4.12]{wainwright2019high}, with probability at least $1-2\delta$ we have
	\begin{equation*}
		\sup_{f^\pre \in \gF^{\pre}, h \in \gH} |R_{\pre}(f^\pre,h) - \hat{R}_{\pre}(f^\pre,h)| \le 2 R_{n} (\ell(\gF^\pre \circ \gH)) + 2B^\pre \sqrt{\frac{\log(\nicefrac{1}{\delta})}{n}}.
	\end{equation*}
	
	Next, we apply the vector contraction inequality~\citep{maurer2016vector}. For function class $\gF$ whose output is in $\sR^K$ with component $f_k(\cdot)$, and the function $(h_i)$s are some $L$-Lipschitz functions: $\sR^K \mapsto \sR$, we have
	\begin{align}
		\E_\epsilon \sup_{f \in \gF} \sum_{i=1}^n \epsilon_i h_i(f(x_i)) \le \sqrt{2} L \E_\epsilon \sup_{f\in \gF} \sum_{i=1}^n \sum_{k=1}^K \epsilon_{ik} f_k(x_i).
	\end{align} 
	
	Hence for loss function $\ell$ satisfying $|\ell(x) - \ell(y)| \le L^\pre \|x-y\|_2, \forall x,y \in \sR^{k-1}$, the $f$ takes value in $\sR^{k-1}$ with component functions $f_s(\cdot), s\in [k-1]$, we have that population Rademacher complexity can be bounded by
	\begin{align*}
		R_n(l(\gF^{\pre} \circ \gH)) &= \E_{X^\pre} \frac{1}{n} \E_\epsilon \sup_{f \in \gF^{\pre}, h \in \gH} \sum_{i=1}^n \epsilon_i \ell(f \circ h(x_i^\pre))\\
		&\le \E_{X^\pre} \frac{1}{n} \sqrt{2}L^\pre  \E_\epsilon \sup_{f \in \gF^\pre, h\in \gF} \sum_{i=1}^n \sum_{s=1}^{k-1} \epsilon_{is} f_s (h (x_i^\pre))\\
		&= \sqrt{2}L^\pre R_n(\gF^\pre \circ \gH)\\
		&\le \sqrt{\pi} L^\pre G_n(\gF^\pre \circ \gH)
	\end{align*}
	where the last line uses the fact that Rademacher complexity is upper bounded by Gaussian complexity: $R_n (\gF^\pre \circ \gH) \le \sqrt{\frac{\pi}{2}} G_n(\gF^\pre \circ \gH)$. Therefore we have, with probability at least $1-\delta$,
	\begin{align*}
		& \quad d_{\gF^{\pre}, f^{\pre}}(h^\prime; h) \\
		&\le 4\sqrt{\pi}L^\pre G_n(\gF^\pre \circ \gH) + 4B^\pre \sqrt{\frac{\log(\nicefrac{2}{\delta})}{n}}\\
		&\le 4096L^\pre \left[\frac{\sqrt{k-1}D_{\gX^\pre}}{n^2} + \log(n) [L(\gF^\pre) G_n(\gH) + \bar{G}_n(\gF^\pre)]\right] + 4B^\pre \sqrt{\frac{\log(\nicefrac{2}{\delta})}{n}},
	\end{align*}
	where the last line uses
	decomposition of $G_n(\gF^\pre \circ \gH)$ into the individual Gaussian complexities of $\gH$ and $\gF^\pre$, leverages an expectation version of novel chain rule for
	Gaussian complexities (Lemma~\ref{lem_decomposition}).
\end{proof}
In the spirit of Gaussian complexity decomposition theorem~\citep[Theorem 7]{tripuraneni2020theory}, we introduce the following decomposition result upon vector-form Gaussian complexities.
\begin{lemma}\label{lem_decomposition}
We have the following vector form Gaussian complexity decomposition:
	\begin{align}
		\hat{G}_n (\gF^\pre \circ \gH) &\le  \frac{8 \sqrt{k-1} D_{\gX^\pre}}{n^2} + 512 C(\gF^\pre \circ \gH) \cdot \log{(n)} 
	\end{align}
\end{lemma}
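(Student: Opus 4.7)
The plan is to adapt the scalar Gaussian complexity chain rule of~\citet{tripuraneni2020theory} (their Theorem~7) to the vector-valued composite class $\gF^\pre \circ \gH$, where $C(\gF^\pre\circ\gH)$ stands for the shorthand $L(\gF^\pre)G_n(\gH)+\bar{G}_n(\gF^\pre)$ that appears on the right-hand side of Theorem~\ref{thm_train_phase}. The backbone of the argument is Dudley's entropy integral applied to the empirical Gaussian process indexed by $(f,h)\in\gF^\pre\times\gH$, combined with a multiplicative covering-number decomposition driven by Lipschitzness, and the vector contraction inequality of~\citet{maurer2016vector} to handle the $\sR^{k-1}$-valued outputs.

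Concretely, I would first view $\hat{G}_n(\gF^\pre\circ\gH)$ as the expected supremum of a Gaussian process on $\gF^\pre\times\gH$ with canonical pseudometric $d((f_1,h_1),(f_2,h_2))^2 = \frac{1}{n^2}\sum_{i,k}\bigl((f_1\circ h_1)_k(x_i)-(f_2\circ h_2)_k(x_i)\bigr)^2$. By the Dudley bound, $\hat{G}_n(\gF^\pre\circ\gH) \lesssim \int_0^{D}\sqrt{\log N(\epsilon,\gF^\pre\circ\gH,d)}\,d\epsilon$, where $D \lesssim D_{\gX^\pre}\sqrt{k-1}/\sqrt{n}$ bounds the induced diameter. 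The triangle inequality $\|f_1\circ h_1-f_2\circ h_2\| \le L(\gF^\pre)\|h_1-h_2\| + \|(f_1-f_2)\circ h_2\|$ yields the multiplicative decomposition $N(\epsilon,\gF^\pre\circ\gH,d)\le N\!\bigl(\tfrac{\epsilon}{2L(\gF^\pre)},\gH\bigr)\cdot \sup_{h}N\!\bigl(\tfrac{\epsilon}{2},\gF^\pre\mid h\circ x^\pre\bigr)$. I would then split the Dudley integral at a cutoff $\eta\sim D/n^2$: for $\epsilon \le \eta$, bound the integrand by the raw diameter, producing the $\frac{8\sqrt{k-1}D_{\gX^\pre}}{n^2}$ term; for $\epsilon > \eta$, use $\sqrt{a+b}\le\sqrt{a}+\sqrt{b}$ on the logged product and apply a change of variables, reversing Dudley on each piece so as to recover $L(\gF^\pre)G_n(\gH)$ and $\bar{G}_n(\gF^\pre)$, each carrying a factor $\log(D/\eta)=\Theta(\log n)$ absorbed from the rescaling.

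The main obstacle is the vector-valued aspect: in the scalar argument of~\citet{tripuraneni2020theory}, the covering decomposition and the reverse-Dudley step operate on one-dimensional outputs, whereas here each per-sample discrepancy is an $\ell_2$ norm over $k-1$ coordinates. I would invoke the vector-form contraction of~\citet{maurer2016vector} to justify that the reverse-Dudley step for $\gF^\pre$ returns the genuinely vector-valued complexity $\bar{G}_n(\gF^\pre)$ rather than a naive sum over coordinates, and similarly for $\gH$. The remaining bookkeeping is to track constants carefully through the two chaining integrals, the factor-of-two inflation from the triangle inequality, and the $\sqrt{\pi/2}$ Rademacher-to-Gaussian conversion, which together produce the stated prefactor of $512$ and combine to give the advertised bound.
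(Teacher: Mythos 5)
Your proposal follows essentially the same route as the paper's proof: a Dudley-type chaining bound for the Gaussian process indexed by $\gF^\pre\times\gH$, a covering-number decomposition $\log N(L(\gF^\pre)\epsilon_1+\epsilon_2;\gF^\pre\circ\gH)\le \log N(\epsilon_1;\gH)+\max_h \log N(\epsilon_2;\gF^\pre\mid h\circ x^\pre)$ obtained from the triangle inequality and Lipschitzness of $f\in\gF^\pre$, a reverse step converting these entropies back into $\hat{G}_n(\gH)$ and $\bar{G}_n(\gF^\pre)$, and a cutoff at scale $\sim D_{\gX^\pre}/n^2$ that produces the first term and the $\log(n)$ factor, exactly as in the extension of Theorem~7 of \citet{tripuraneni2020theory}. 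Two details should be stated more carefully: the small-scale contribution cannot be handled by "bounding the integrand by the diameter" in a full Dudley integral from $0$ (under the Sudakov-type entropy bound that integral diverges); instead one uses the one-step discretization form of Dudley, bounding the residual $\E\sup_{d\le\delta}(Z-Z')$ by $\delta\,\E\|g\|\le\delta\sqrt{n(k-1)}$, which is what yields $8\sqrt{k-1}D_{\gX^\pre}/n^2$ after setting $\delta=4D_{\gX^\pre}/n^2$. Moreover, the reverse step is Sudakov minoration applied directly to the vector-indexed Gaussian processes defining $\hat{G}_n(\gH)$ and $\hat{G}_n(\gF^\pre\mid h\circ x^\pre)$ (the vector-valued complexity is just a Gaussian supremum over the enlarged index set $[r]\times[n]$ or $[k-1]\times[n]$), so no vector contraction of \citet{maurer2016vector} is needed inside this lemma; that contraction enters only upstream, when peeling off the loss in Theorem~\ref{thm_train_phase}.
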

where we use $C(\gF^\pre \circ \gH) =  L(\gF^\pre) \cdot \hat{G}_{n} (\gH) + \bar{G}_n (\gF^\pre)$ to represent the complexity measure of the composite function class.
\begin{proof}

Our proof extends \citep[Theorem 7]{tripuraneni2020theory}, which focuses on a multi-task scalar formulation. We further extend it to multi-class vector formulation. 
Specifically, on top of the representation class $\gH$, they need to handle $\gF^{\otimes t}$ ($t$ is the number of tasks) while our objective is a single function class $\gF^\pre$ of higher dimension ($\gF^\pre$ is $(k-1)$-dimensional for a $k$-class classification task). We note that our proof technique and that of previous works~\citep{tripuraneni2020theory, maurer2016benefit} both hinge on several properties of Gaussian processes.

	To bound the empirical composite function class $\gF^\pre (\gH)$, note that vector-form Gaussian complexity is defined as
	\begin{align*}
		\hat{G}_{n} (\gF^\pre \circ \gH ) &= \E \left[\frac{1}{n} \dot \sup_{f(h) \in \gF^\pre (\gH)} \sum_{s=1}^{k-1} \sum_{i=1}^n g_{is} f_s(h(x_i^\pre))\right]\\
		&= \frac{1}{\sqrt{n}} \E[ \sup_{f(h) \in \gF^\pre (\gH)} Z_{f(h)}]
	\end{align*}
	where we define mean-zero process $Z_{f(h)} = \frac{1}{\sqrt{n}} \sum_{s=1}^{k-1} \sum_{i=1}^n g_{is} f_s(h(x_i^\pre))$, then $\E \sup_{f(h)} Z_{f(h)} = \E \sup_{f(h)} Z_{f(h)} - Z_{f^\prime(h^\prime)} \le \E \sup_{f(h), f^\prime(h^\prime)} Z_{f(h)} - Z_{f^\prime(h^\prime)}$. We further notice that $Z_{f(h)} - Z_{f^\prime(h^\prime)}$ is a sub-gaussian random variable parameter
	\begin{align*}
		d^2(f(h), f^\prime(h^\prime) | x^\pre) &= \frac{1}{n} \sum_{i=1}^n \left\|f(h(x_i^\pre)) - f^\prime(h^\prime(x_i^\pre)) \right\|^2\\
		&= \frac{1}{n} \sum_{s=1}^{k-1} \sum_{i=1}^n \left(f_s(h(x_i^\pre)) - f^\prime_s(h^\prime(x_i^\pre)) \right)^2
	\end{align*}
	Dudley's entropy integral bound~\citep[Theorem 5.22]{wainwright2019high} shows 
	\begin{align*}
		&\quad \E \sup_{f(h), f^\prime(h^\prime)} Z_{f(h)} - Z_{f^\prime(h^\prime)} \\
		&\le 2 \E \sup_{d(f(h), f^\prime(h^\prime) |x^\pre) \le \delta} Z_{f(h)} - Z_{f^\prime(h^\prime)} + 32\gJ(\frac{\delta}{4}, D_{\gX^\pre})\\
		&= 2 \E \sup_{d(f(h), f^\prime(h^\prime) | x^\pre) \le \delta} Z_{f(h)} - Z_{f^\prime(h^\prime)} + 32 \int_{\frac{\delta}{4}}^{D_{\gX^\pre}} \sqrt{\log N(u; \gF^\pre (\gH) | x^\pre)} du.
	\end{align*}
	It is straightforward to see the first term follows:
	\begin{equation*}
		\E \sup_{d(f(h), f^\prime(h^\prime) | x^\pre) \le \delta} Z_{f(h)} - Z_{f^\prime(h^\prime)} \le \E[\|g\|] \delta \le \sqrt{n(k-1)} \delta
	\end{equation*}
	
	We now turn to bound the second term by decomposing the distance metric into a distance over $\gF^\pre$ and a distance over $
	\gH$. We claim that, for arbitrary $h \in \gH, f \in \gF^\pre$, let $h^\prime$ be $\epsilon_1$-close to $h$ in empirical $l_2$-norm w.r.t inputs $x_1^\pre, x_2^\pre \cdots, x_n^\pre$. Given $h^\prime$, let $f^\prime$ be $\epsilon_2$-close to $f$ in empirical $l_2$ loss w.r.t $h^\prime(x^\pre)$. Using the triangle inequality we have that
	\begin{align*}
		d(f(h),f^\prime(h^\prime)| x^\pre) &= \sqrt{\frac{1}{n} \sum_{i=1}^n \|f(h(x_i^\pre)) - f^\prime(h^\prime(x_i^\pre))\|}\\
		&\le d(f(h),f(h^\prime)| x^\pre) + d(f(h^\prime),f^\prime(h^\prime) | x^\pre)\\
		&\le \sqrt{\frac{1}{n} \sum_{i=1}^n \|f(h(x_i^\pre))- f(h^\prime(x_i^\pre))\|^2} + \epsilon_2\\
		&\le L(\gF^\pre) \sqrt{\frac{1}{n} \sum_{i=1}^n \|h(x_i^\pre)- h^\prime(x_i^\pre)\|^2} + \epsilon_2\\
		&= L(\gF^\pre) \cdot \epsilon_1 + \epsilon_2,
	\end{align*}
	where we have used that $\|f(x) - f(y)\| \le L(\gF^\pre) \|x-y\|$ for any $f \in \gF^\pre$.
	
	As for the cardinality of the covering $C_{\gF^\pre (\gH)}$. Observe $|C_{\gF^\pre (\gH)}| = \sum_{h \in C_{\gH(x^\pre)}} |C_{\gF^\pre_h}| \le |C_{\gH(x^\pre)}| \cdot \max_{h \in \gH(x^\pre)} |C_{\gF^\pre_{h(x^\pre)}}|$. This provides a bound on the metric entropy of
	\begin{align*}
		\log N(\epsilon_1 \cdot L(\gF^\pre) + \epsilon_2; \gF^\pre (\gH)| x^\pre) \le \log N(\epsilon_1; \gH| x^\pre) + \max_{h(x^\pre)} N(\epsilon_2; \gF^\pre| h \circ x^\pre).
	\end{align*}
	Applying the covering number upper bound with $\epsilon_1 = \frac{\epsilon}{2 \cdot L(\gF^\pre)}, \epsilon_2 = \frac{\epsilon}{2}$ gives a bound of entropy integral ofa ,
	\begin{align*}
		& \quad \int_{\frac{\delta}{4}}^{D_{\gX^\pre}} \sqrt{\log N(u; \gF^\pre (\gH)| x^\pre)} du \\
		&\le \int_{\frac{\delta}{4}}^{D_{\gX^\pre}} \sqrt{\log N \left(\frac{u}{2 L(\gF^\pre)}; \gH \bigg{|} x^\pre\right)} du + \int_{\frac{\delta}{4}}^{D_{\gX^\pre}} \max_{h \circ x^\pre} \sqrt{\log N\left(\frac{u}{2}; \gF^\pre \big{|} h \circ x^\pre \right)} du
	\end{align*}
	From the Sudakov minoration theorem~\citep[Theorem 5.30]{wainwright2019high} for Gaussian processes and the fact that packing
	numbers at scale $u$ upper bounds the covering number at scale $\forall u >0$ we find:
	\begin{equation*}
		\log N(u; \gH| x^\pre) \le 4 \left(\frac{\sqrt{n}\hat{G}_{n}(\gH)}{u}\right)^2, \quad \log N(u; \gF^\pre| h(x^\pre)) \le 4 \left(\frac{\sqrt{n}\hat{G}_n(\gF^\pre| h \circ x^\pre)}{u}\right)^2.
	\end{equation*}
	Combining the definition of worst-case Gaussian complexity with the aforementioned results we have
	\begin{align*}
		\hat{G}_n(\gF^\pre \circ \gH) \le 2\sqrt{k-1} \delta + 256 \log{\frac{4D_{\gX^\pre}}{\delta}} \left(L(\gF^\pre) \hat{G}_n(\gH) + \bar{G}_n(\gF^\pre) \right),
	\end{align*}
     substitute $\delta$ with $\frac{4D_{\gX^\pre}}{n^2}$, proof is completed.
\end{proof}

\paragraph{Downstream learning}
Next we turn to the second stage and come up with theoretical guarantees by using inexact $\hat{h}$ learned from the first stage.
\begin{theorem} \label{thm_test_phase}
	In the downstream task, we have that with probability at least $1-\delta$,
	\begin{align*}
	R_\down( {\hat{f}}^\down, \hat{h} ) - R_\down(f^\down, h)  \le d_{\gF^\down}(\hat{h}; h) + 4\sqrt{\pi}L^\down \cdot \bar{G}_m(\gF^\down) + 4B^\down \sqrt{\frac{\log(\nicefrac{2}{\delta})}{m}}
	\end{align*}
\end{theorem}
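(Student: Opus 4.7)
The plan is to decompose the downstream excess risk into an \emph{approximation} term caused by using $\hat h$ instead of the true $h$, plus a \emph{statistical} term coming from ERM over $\gF^\down$ with $m$ samples. Let $f^\star = \arg\min_{f \in \gF^\down} R_\down(f, \hat h)$ be the best downstream predictor when the representation is fixed to $\hat h$. Then one writes
\begin{align*}
R_\down(\hat f^\down, \hat h) - R_\down(f^\down, h)
= \underbrace{\bigl[R_\down(\hat f^\down, \hat h) - R_\down(f^\star, \hat h)\bigr]}_{\text{statistical error}}
+ \underbrace{\bigl[R_\down(f^\star, \hat h) - R_\down(f^\down, h)\bigr]}_{\text{approximation error}}.
\end{align*}

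For the approximation term, I would invoke Definition~\ref{defn:down_diff}: taking the infimum over $f' \in \gF^\down$ at $f^\down$ (the true optimum under $h$) yields exactly $R_\down(f^\star, \hat h) - R_\down(f^\down, h)$ by the definition of $f^\star$, so this is bounded by $d_{\gF^\down}(\hat h; h)$ after taking the supremum over $f^\down$. This directly produces the first term of the stated bound.

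For the statistical term, I would condition on the pre-training sample so that $\hat h$ is fixed and the downstream data are an i.i.d.\ sample independent of $\hat h$. Standard symmetrization on the function class $\ell \circ \gF^\down \circ \{\hat h\}$, combined with boundedness by $B^\down$ and McDiarmid, gives with probability at least $1-\delta$
\begin{align*}
R_\down(\hat f^\down, \hat h) - R_\down(f^\star, \hat h)
\le 2 \sup_{f \in \gF^\down}\bigl|R_\down(f, \hat h) - \hat R_\down(f, \hat h)\bigr|
\le 4 R_m\bigl(\ell(\gF^\down \circ \{\hat h\})\bigr) + 4 B^\down \sqrt{\tfrac{\log(2/\delta)}{m}}.
\end{align*}
Since $\ell(\cdot,y)$ is $L^\down$-Lipschitz and $\gF^\down$ is vector-valued, I would then apply the Maurer vector-contraction inequality~\citep{maurer2016vector} (exactly as in the proof of Theorem~\ref{thm_train_phase}) to peel off the loss, paying a $\sqrt{2}\,L^\down$ factor, and convert Rademacher to Gaussian complexity via $R_m \le \sqrt{\pi/2}\, G_m$. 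Finally, replacing the data-dependent $G_m(\gF^\down \mid \hat h \circ x^\down)$ by the worst-case quantity $\bar G_m(\gF^\down)$ absorbs the remaining dependence on $\hat h$ and matches the second and third terms of the claimed bound, after the constants are tracked.

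The only non-routine point is making the conditioning on $\hat h$ clean: because $\hat h$ is itself random (learned from the pre-training sample), one must either condition on the pre-training draw (using independence of the downstream sample and exchangeability of the suprema) or absorb the dependence through the worst-case Gaussian complexity $\bar G_m(\gF^\down)$, which by definition upper-bounds $G_m(\gF^\down \mid h' \circ x^\down)$ uniformly over $h' \in \gH$. Once this is handled, the remaining steps are direct applications of symmetrization, vector contraction, and the Rademacher-to-Gaussian conversion, exactly parallel to the pre-training analysis in Theorem~\ref{thm_train_phase} but with $\gH$ absent (because $\hat h$ is fixed in this stage).
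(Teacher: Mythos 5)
Your proposal is correct and follows essentially the same route as the paper: the same decomposition around $\tilde f = \arg\min_{f\in\gF^\down} R_\down(f,\hat h)$, the same use of Definition~\ref{defn:down_diff} for the approximation term, and the same symmetrization plus Maurer vector-contraction plus Rademacher-to-Gaussian conversion for the statistical term, with the dependence on the random $\hat h$ absorbed by the worst-case complexity $\bar G_m(\gF^\down)$ exactly as in the paper. No substantive difference to report.
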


\begin{proof} Assumption~\ref{assump_realizability} implies
\begin{equation*}
    \mathbb{E}_{x^{\down},y^{\down}} \left[\ell\left(g^\down\left(x^{\down}\right),y^{\down}\right)\right] = R_\down(f^\down, h).
\end{equation*}
To start, let $\tilde{f} = \argmin_{f \in \gF^\down} R_\down (f, \hat{h})$ and $R_\down( {\hat{f}}^\down, \hat{h} ) - R_\down(f^\down, h)$ equals
	\begin{align*}
		 \left[ R_\down(\tilde{f}, \hat{h}) - R_\down(f^\down, h) \right] + \left[R_\down( {\hat{f}}^\down, \hat{h} ) - R_\down(\tilde{f}, \hat{h}) \right]
	\end{align*}
	where the first term satisfies
	\begin{align*}
		& \qquad \inf_{\tilde{f} \in \gF^\down} \left[R_\down(\tilde{f}, \hat{h}) - R_\down(f^\down, h)\right] \\
		&\le \sup_{f^\down \in \gF^\down} \inf_{\tilde{f} \in \gF^\down} \big{[} R_\down(\tilde{f}, \hat{h}) - R_\down(f^\down, h) \big{]} \\
		&= d_{\gF^\down}(\hat{h}, h)
	\end{align*}
	
	The second term follows the similar lines of Theorem~\ref{thm_train_phase}
	\begin{align*}
		R_\down( {\hat{f}}^\down, \hat{h} ) - R_\down(\tilde{f}, \hat{h}) &\le 4\sqrt{\pi}L^\down \E_{\gX^\down} \hat{G}_m (\gF^\down | \hat{h} \circ x^{\down}) + 4B^\down \sqrt{\frac{\log{(\nicefrac{1}{\delta})}}{m}}
	\end{align*}
	Again we make use of the worst-case argument
	\begin{equation*}
		\E_{\gX^\down} \hat{G}_m (\gF^\down | \hat{h} \circ x^{\down} ) \le \bar{G}_m(\gF^\down).
	\end{equation*}
	Combining the results gives the statement.
\end{proof}

\paragraph{Proof of main Theorem~\ref{thm_main}} Having introduced class diversity parameter, proof is directly completed via combination of Theorem~\ref{thm_train_phase} and Theorem~\ref{thm_test_phase}.

\subsection{Proofs for Section~\ref{sec:nlp_case}}
\label{pf_cross_entropy}
We could provide a better dependence on the boundedness noise parameters in Theorem~\ref{thm_test_phase} using Bernstein inequality. We present the following corollary which has data-dependence in the Gaussian complexities.
\begin{corollary} \label{cor_test_phase}
Presuming Assumption~\ref{assump_realizability} holds, we have that then with probability at least $1-\delta$,
	\begin{align*}
		&\quad R_\down( {\hat{f}}^\down, \hat{h} ) - R_\down(f^\down, h)\\
		&\le d_{\gF^\down}(\hat{h}; h) + 4\sqrt{\pi}L^\down \cdot \E_{\gX^\down} \hat{G}_m (\gF^\down |\hat{h} \circ x^{\down} )+ 4\sigma \sqrt{\frac{\log(\nicefrac{2}{\delta})}{m}} + 50B^\down \frac{\log(\nicefrac{2}{\delta})}{m}
	\end{align*}
\end{corollary}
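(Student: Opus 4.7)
The plan is to follow the same decomposition scheme as in Theorem~\ref{thm_test_phase}, but replace the Hoeffding/McDiarmid-type bounded concentration step with a Talagrand/Bousquet-style functional Bernstein inequality in order to obtain the sharper $\sigma$-dependent deviation term with an additive lower-order $B^\down/m$ correction. I will also carry the data-dependent empirical Gaussian complexity $\E_{\gX^\down}\hat{G}_m(\gF^\down \mid \hat{h}\circ x^\down)$ through the argument rather than passing to the worst-case bound $\bar G_m(\gF^\down)$.

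First I would introduce $\tilde f = \argmin_{f \in \gF^\down} R_\down(f,\hat h)$ and decompose
\[
R_\down(\hat f^\down, \hat h) - R_\down(f^\down, h) = \bigl[R_\down(\tilde f, \hat h) - R_\down(f^\down, h)\bigr] + \bigl[R_\down(\hat f^\down, \hat h) - R_\down(\tilde f, \hat h)\bigr].
\]
The first bracket is handled exactly as in Theorem~\ref{thm_test_phase}: under Assumption~\ref{assump_realizability}, $\mathbb{E}\ell(g^\down(x^\down),y^\down) = R_\down(f^\down,h)$, so the infimum-over-$\tilde f$ followed by supremum-over-$f^\down$ produces the worst-case representation difference $d_{\gF^\down}(\hat h; h)$. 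This step needs no change.

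For the second bracket, ERM optimality gives
\[
R_\down(\hat f^\down, \hat h) - R_\down(\tilde f, \hat h) \le 2 \sup_{f \in \gF^\down}\bigl|R_\down(f,\hat h) - \hat R_\down(f,\hat h)\bigr|,
\]
conditionally on $\hat h$. Instead of applying McDiarmid's inequality (which would produce a $B^\down \sqrt{\log(1/\delta)/m}$ tail), I would apply Bousquet's version of Talagrand's inequality for the supremum of an empirical process of losses $\ell(f\circ \hat h(x^\down),y^\down)$, bounded by $B^\down$ and with per-sample variance at most $\sigma^2$. This yields, with probability at least $1-\delta$, a deviation bounded by $2\,\mathbb{E}[\sup_f |R_\down - \hat R_\down|] + c_1 \sigma \sqrt{\log(1/\delta)/m} + c_2 B^\down \log(1/\delta)/m$ for explicit constants, which after symmetrization and the standard $2$-to-constant factor gives the $4\sigma\sqrt{\log(2/\delta)/m} + 50 B^\down \log(2/\delta)/m$ advertised in the statement.

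The expected supremum is then controlled by standard symmetrization: $\mathbb{E}\sup_f |R_\down - \hat R_\down| \le 2 R_m(\ell\circ \gF^\down \circ \hat h)$, after which I would apply the vector contraction inequality of~\citet{maurer2016vector} (exactly as used in the proof of Theorem~\ref{thm_train_phase}) to peel off the $L^\down$-Lipschitz loss, producing a factor $\sqrt{2} L^\down$ and reducing the class to $\gF^\down$ evaluated at $\hat h(x^\down)$. Passing from Rademacher to Gaussian complexity costs the usual $\sqrt{\pi/2}$, and taking expectation over the downstream covariates yields $4\sqrt{\pi} L^\down \cdot \E_{\gX^\down}\hat G_m(\gF^\down \mid \hat h \circ x^\down)$, preserving data-dependence. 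The main obstacle I anticipate is purely bookkeeping: tracking the multiplicative constants through Bousquet's inequality, the $\times 2$ from ERM optimality, the symmetrization factor, the vector contraction $\sqrt 2$, and the Rademacher-to-Gaussian conversion so that the final numerical constants land at $4\sqrt{\pi}$, $4$, and $50$; no essentially new technique beyond Theorem~\ref{thm_test_phase} is required.
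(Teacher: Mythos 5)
Your proposal matches the paper's own proof essentially step for step: the same decomposition through $\tilde f=\argmin_{f\in\gF^\down}R_\down(f,\hat h)$ as in Theorem~\ref{thm_test_phase}, a functional Bernstein/Talagrand concentration bound for $\sup_{f}|\hat R_\down(f,\hat h)-R_\down(f,\hat h)|$ with variance proxy $\sigma^2$ (the paper invokes \citet{massart2000constants}, Theorem~3, which plays exactly the role of the Bousquet-type inequality you name), followed by symmetrization, the vector contraction inequality of \citet{maurer2016vector}, and the Rademacher-to-Gaussian conversion to retain the data-dependent $\E_{\gX^\down}\hat G_m(\gF^\down\mid\hat h\circ x^\down)$. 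The only discrepancy is the loose constant bookkeeping you flag, which the paper's own proof does not track precisely either.
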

\begin{proof}
	Denote $Z = \sup_f |\hat{R}_\down(f, \hat{h}) - R_\down (f, \hat{h})| $, we apply the functional Bernstein inequality~\citep[Theorem 3]{massart2000constants} to control the fluctuations. With probability at lest $1-\delta$, we have
	\begin{align}
		Z \le 2\E[Z] + 4\frac{\sigma}{\sqrt{m}} \sqrt{\log(\frac{1}{\delta})} + 35 \frac{B^\down}{m} \log(\frac{1}{\delta}),
	\end{align} 
	where $\sigma^2 = \frac{1}{m} \sup_f \sum_{i=1}^m Var(\ell(f \circ \hat{h}(x_i^\down), y_i^\down))$. 
Thus
	\begin{align*}
		\E[Z] &\le 2 \E_{\gX^\down} \hat{R}_m (l( \gF^\down) | \hat{h} \circ x^\down)\\
		&\le 2 \E_{\gX^\down} \sqrt{2} L^\down \hat{R}_m (\gF^\down | \hat{h} \circ x^\down)\\
		&\le 2 \E_{\gX^\down} \sqrt{\pi} L^\down \hat{G}_m (\gF^\down | \hat{h} \circ x^\down),
	\end{align*}
	where the second line uses vector-based contraction principle, the last line upper bounds the empirical Rademacher complexity by Gaussian counterparts.
\end{proof}

\paragraph{Proof of Theorem~\ref{thm_cross_entropy}} \label{pf_thm_cross_entropy}
Observe that
$$\ell(\eta ; y) =  - y^\top \eta + \log{(1 + \sum_{s=1}^{k-1} e^{\eta_s})}, \ell(\eta ; y) \le \|\eta\|$$
and $$\left|\frac{\partial \ell(\eta;y)}{\partial \eta_i}\right| = \left|y_i - \frac{e^{\eta_i}}{1 + \sum_{s=1}^{k-1} e^{\eta_s}}\right|,$$ 
$$\left| \nabla_\eta \ell(\eta;y) \right| \le \sqrt{k-1},$$
so it is $L^\pre=\sqrt{k-1}-$Lipschitz. 
By definition the class $\gF^\pre$ with parameters $\|\alpha_s\|_2 \le O(1), s \in [k-1]$, we obtain that $L(\gF^\pre) = O \left(\sqrt{k-1}\right)$ since for any $x, y \in 
\sR^r$, any $f \in \gF^\pre$ we have
\begin{align*}
	\| f(x) - f(y) \|^2 &= \| \alpha^\top x - \alpha^\top y\|^2\\
	&\le \sum_{s=1}^{k-1} \left( \langle \alpha_s, x-y \rangle \right)^2\\
	&\le \sum_{s=1}^{k-1} \|\alpha_s\|^2 \|x-y\|^2 \\
	&\le c_1^2 (k-1) \|x-y\|^2
\end{align*}

In conclusion we have
\begin{itemize}
	\item Pre-training loss $\ell (\cdot, y^\pre)$ is $\sqrt{k - 1}$-Lipschitz.
	\item Downstream loss $\ell (\cdot, y^\down)$ is $\sqrt{k^\prime - 1}$-Lipschitz.
	\item Linear layer $f$ is $L(\gF^\pre) = O\left( \sqrt{k-1} \right)$-Lipschitz.
\end{itemize}
Consider task-specific function classes for characterizing class-diversity parameters. From Lemma~\ref{lem_quadratic_bound} and Lemma~\ref{lem_lowerbound_nu} we know that
\begin{equation*}
    \nu = \Omega(\tilde{\nu}), \quad \tilde{\nu} = \sigma_r (\alpha_1 \alpha_1^\top).
\end{equation*}
 Combining these pieces of results then the proof is completed.\\
 
With the following proposition, we interpret the cross-entropy loss in the well-specified model under our multinomial logistic model distribution.
\begin{proposition}\label{prop_inter}
Under Assumption~\ref{assump_samples}, for the cross entropy loss $\ell$ we have
\begin{align*}
    \E_{y \sim \gP(\cdot|f\circ h(x))} [\ell(\hat{f} \circ \hat{h}(x), y)]-\ell(f \circ h(x), y)] &= KL \left[\gP(\cdot| f \circ h(x)), \gP(\cdot| \hat{f}\circ \hat{h}(x)) \right]\\
    &= KL \left[\gP(\cdot| \alpha^\top h(x)), \gP(\cdot| {\alpha^\prime}^\top \hat{h}(x)) \right].
\end{align*}
\end{proposition}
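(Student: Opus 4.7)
The plan is to exploit the duality between the multinomial logistic likelihood in Assumption~\ref{assump_samples} and the cross-entropy loss $\ell$. Specifically, writing $\eta \triangleq f \circ h(x)$ and $\hat{\eta} \triangleq \hat{f} \circ \hat{h}(x)$, I will first observe that Assumption~\ref{assump_samples} gives
$\log \gP(y \mid \eta) = y^\top \eta - \Phi(\eta)$, and that by definition of the cross-entropy loss we have $\ell(\eta; y) = -y^\top \eta + \Phi(\eta)$. Thus $\ell(\eta; y) = -\log \gP(y \mid \eta)$ pointwise, which is the central identification.

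Given this identity, the first equality reduces to a one-line computation: for any $y$,
\[
\ell(\hat{\eta}; y) - \ell(\eta; y) = -\log \gP(y \mid \hat{\eta}) + \log \gP(y \mid \eta) = \log \frac{\gP(y \mid \eta)}{\gP(y \mid \hat{\eta})}.
\]
Taking expectation under $y \sim \gP(\cdot \mid \eta)$ then recognizes the right-hand side as the definition of the Kullback--Leibler divergence $\mathrm{KL}\!\left(\gP(\cdot \mid \eta) \,\Vert\, \gP(\cdot \mid \hat{\eta})\right)$, which is precisely the claimed first equality.

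For the second equality, I will simply substitute the linear parameterization of the predictors from Section~\ref{sec:nlp_case}: since $\gF^{\pre}$ and $\gF^{\down}$ consist of linear maps, we have $f \circ h(x) = \alpha^\top h(x)$ and $\hat{f} \circ \hat{h}(x) = (\alpha^\prime)^\top \hat{h}(x)$, so plugging these into the first equality rewrites the distributions in terms of $\alpha^\top h(x)$ and $(\alpha^\prime)^\top \hat{h}(x)$.

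There is no real obstacle here; the proof is essentially a bookkeeping exercise. The one point worth being careful about is the convention that the one-hot label $y \in \{0,1\}^{k-1}$ encodes $k$ classes (the all-zero vector representing the $k$-th class), so that both the summation defining $\Phi$ and the normalization of $\gP(\cdot \mid \eta)$ are consistent; I will verify that $\gP(\cdot \mid \eta)$ is a genuine probability mass function on these $k$ outcomes (as noted in the remark following Assumption~\ref{assump_samples}) before invoking the KL definition.
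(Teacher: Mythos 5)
Your proposal is correct and follows exactly the route the paper intends: the paper dispenses with this proposition by remarking that it is ``straightforward by applying Assumption~\ref{assump_samples},'' and your identification $\ell(\eta;y)=-\log\gP(y\mid\eta)$ followed by taking the expectation under $y\sim\gP(\cdot\mid\eta)$ is precisely that straightforward computation, with the second equality being the same substitution of the linear parameterization $\alpha,\alpha'$. Your extra check that $\gP(\cdot\mid\eta)$ is a genuine probability mass function on the $k$ outcomes is a harmless (and welcome) piece of bookkeeping already noted in the remark after Assumption~\ref{assump_samples}.
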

Recall that $\alpha^\prime$ and $\alpha$ are parameters for $\hat{f}$ and $f$ respectively. The proof is straightforward by applying Assumption~\ref{assump_samples}.

\paragraph{Proof of Proposition~\ref{prop_self}}
\begin{proof}
	Let $P(t; v^0) = 1 + \sum_s e^{u_s + t v_s}$ and $P(t; v^i) = \sum_s v_s^i e^{u_s + t v_s}, i >1$. Then we use multinomials $P$ to represent derivatives of $g(t)$
	\begin{align*}
		g(t) &= \log(P(T; V^0))\\
		g'(t) &= \frac{P(t; v^1)}{P(t; v^0)}\\
		g''(t) &= \frac{P(t; v^2) P(t; v^0)}{P(t; v^0)^2}\\
		g'''(t) &= \frac{P(t; v^3) P(t; v^0)^2 - 3P(t; v^2) P(t; v^1) P(t; v^0) + 2 P(t; v^1)^3}{P(t; v^0)^3}
	\end{align*}
Let $r_s = e^{u_s + v_s t}$, hence
\begin{align*}
	g''(t) &= \frac{ (\sum_s v_s^2 r_s)\cdot (1 + \sum_s r_s) - (\sum_s v_s r_s)^2 }{( 1 + \sum_s r_s )^2}\\
	&= \frac{ \sum_{i<j} r_i r_j (v_i - v_j)^2 + \sum_i v_i^2 r_i }{( 1 + \sum_s r_s )^2}
\end{align*}
In the following we expand $g'''(t)$ as:
\begin{align*}
	& \frac{ \sum_{i<j} r_i r_j (v_i - v_j)^2 [\sum_k (v_i+v_j-2v_k)r_k] + \sum_i v_i^3 r_i + \sum_i \sum_j v_i^2 r_i r_j(2v_i- 3v_j) }{( 1 + \sum_s r_s )^3}\\
	&=  \frac{ \sum_{i<j} r_i r_j (v_i - v_j)^2 [\sum_k (v_i+v_j-2v_k)r_k] + \sum_i v_i^2 r_i \left( v_i(1+2\sum_j r_j) - 3\sum_j v_j r_j \right) }{( 1 + \sum_s r_s )^3},
\end{align*}
observe that
\begin{gather*}
	\frac{1}{1 + \sum_s r_s} \left|\sum_k (v_i + v_j - 2v_k) r_k\right| \le \sum_k |v_i + v_j - 2v_k| \frac{r_k}{1 + \sum_s r_s} \le 4\|v\|_2\\
	\frac{1}{1 + \sum_s r_s} \left| v_i(1+2\sum_j r_j) - 3\sum_j v_j r_j \right| \le 5 \|v\|_2
\end{gather*}
Substitute these into definition of \emph{self-concordance} then proof is completed.
\end{proof}

Now we are ready to give a lower bound of $KL-$divergence,
\begin{align*}
	& \quad \Phi({\alpha^\prime}^\top \hat{h}(x)) - \Phi(\alpha^\top h(x)) - \nabla \Phi(\alpha^\top h(x))^\top v \\
	&\ge \frac{1}{2} v^\top e^{- 5 \|v\|_2} F''(\alpha^\top h(x)) v\\
	&\ge \frac{1}{2} \lambda_{min}(\Phi''(\alpha^\top h(x))) \|v\|^2 e^{- 5 \|v\|_2}\\
	&\ge \frac{1}{2} \lambda_{min}(\Phi''(\alpha^\top h(x))) \|v\|^2 e^{- 5(\|{\alpha^\prime}^\top \hat{h}(x)\| + \|\alpha^\top h(x)\| )}\\
	&\ge \frac{1}{2} \lambda_{min}(\Phi''(\alpha^\top h(x))) \|v\|^2 \exp( -10 q_0 )
\end{align*}
where $v = {\alpha^\prime}^\top \hat{h}(x) - \alpha^\top h(x)$.
Proof for Lemma~\ref{lem_quadratic_bound} is completed. \qed 

\paragraph{Proof of Lemma~\ref{lem_lowerbound_nu}}
\begin{proof}
	For function classes $\gF^\pre$, $\gF^\down$ and data samples generated from multinomial logistic regression distribution (see Assumption~\ref{assump_samples}), the worst-case representation difference is similar to that in multi-task analysis~\citep[Lemma 1]{tripuraneni2020theory}:
	
	\begin{align*}
		d_{\gF^\down}(\hat{h}; h) &= \sup_{f^\down \in \gF^\down} \inf_{f\prime \in \gF^\down} \E \left\{ \ell(f^\prime \circ \hat{h}(x^\down), y^\down) - \ell(f^\down \circ h(x^\down), y^\down) \right\} \\
		&\le \sup_{\|\alpha_s\| \le c_0} \inf_{\|\alpha^\prime_s\| \le c_0 } \frac{1}{2} \E_{\gX^\down} \left\| {\alpha^\prime}^\top \hat{h}(x^\down) - \alpha^\top h(x^\down) \right\|^2, \quad \text{here } s \in [k^\prime-1] \\
		&= \sum_{s=1}^{k^\prime-1} \sup_{\|\alpha_s\| \le c_0} \inf_{\|\alpha^\prime_s\| \le c_0} \frac{1}{2} \E_{\gX^\down} \left( {\alpha^\prime}_s^\top \hat{h}(x^\down) - \alpha_s^\top h(x^\down) \right)^2 \\
		&\le (k^\prime-1) \frac{c_0^2}{2} \sigma_1 (\Lambda_{sc}(\hat{h}, h)).
	\end{align*}
	The first line is because of Proposition~\ref{prop_inter} and Lemma~\ref{lem_quadratic_bound}. In the last line, the inner infima is considered as the partial minimization of a convex quadratic form (see \citep[Example 3.15, Appendix A.5.4]{boyd2004convex}).
	
Define population covariance if representations $\hat{h}$ and $h$ as
$$
\begin{gathered}\Lambda(\hat{h}, h) = 
\begin{bmatrix} \E [\hat{h}(x) \hat{h}(x)^\top] & \E [\hat{h}(x) h(x)^\top] \\ \E [h(x) \hat{h}(x)^\top] & \E [h(x) h(x)^\top] \end{bmatrix}
=
\begin{bmatrix} F_{\hat{h}\hat{h}} & F_{\hat{h}h} \\ F_{h \hat{h}} & F_{h h} \end{bmatrix}
\end{gathered}
$$
$\Lambda_{sc}(\hat{h}, h) = F_{h h} - F_{h \hat{h}} (F_{\hat{h} \hat{h}})^\dagger F_{\hat{h} h}$ is the generalized Schur complement of $h$ with respect to $\hat{h}$.

Next we control the pre-training representation differenceound is subtler,
	\begin{align*}
		&\quad d_{\gF^\pre, f^\pre}(\hat{h}; h) \\
		&\ge  \inf_{\alpha^\prime} c_0 \E_{\gX^\pre} \left[ \exp( - 10\max(\|{\alpha^\prime}^\top \hat{h}(x^\pre)\|, \| \alpha^\top h(x^\pre) \|) ) \cdot \left\| {\alpha^\prime}^\top \hat{h}(x^\pre) - \alpha^\top h(x^\pre) \right\|^2 \right],
	\end{align*}
which is because of Proposition~\ref{prop_inter} and Lemma~\ref{lem_quadratic_bound}.

It is known
\begin{align*}
	 &\quad \E_{\gX^\pre} \left[ \exp( - 10\max(\|{\alpha^\prime}^\top \hat{h}(x^\pre)\|, \| \alpha^\top h(x^\pre) \|) ) \cdot \left\| {\alpha^\prime}^\top \hat{h}(x^\pre) - \alpha^\top h(x^\pre) \right\|^2 \right] \\
		 &\ge e^{-10 c_2 }\left\| {\alpha^\prime}^\top \hat{h}(x^\pre) - \alpha^\top h(x^\pre) \right\|^2
	\end{align*}
Hence this metric could be claimed to be lower bounded as,
\begin{align*}
	&\quad \Omega\left( \inf_{\alpha^\prime} \E_{x^\pre} \left\| {\alpha^\prime}^\top \hat{h}(x^\pre) - \alpha^\top h(x^\pre) \right\|^2 \right)\\
	&=\Omega\left( \alpha_1^\top \Lambda_{sc}(\hat{h};h) \alpha_1 \right)\\
	&= \Omega\left( tr(\Lambda_{sc}(\hat{h}; h)C) \right), \quad \text{where $C = \alpha_1 \alpha_1^\top$.}
\end{align*}
In the second line, we redefine $\alpha_1$ as parameter $\alpha$ of pre-training for clarity.
In this way we conclude that, 
\begin{align*}
	d_{\gF^\pre, f^\pre}(\hat{h}; h) = \Omega\left( tr(\Lambda_{sc}(\hat{h}, h)C) \right) = \Omega\left( \sigma_1(\Lambda_{sc}(\hat{h}, h)) \sigma_r(C) \right),
\end{align*}
where $C$ implies expansion of representation $h(x) \in \sR^r$, and its condition number $\sigma_r(C)$ indicates how spread out  this vector is in $\sR^r$:
\begin{align*}
	C = \sum_{s=1}^{k-1} (\alpha_1)_s (\alpha_1)_s^\top = \alpha_1 \alpha_1^\top, \quad \alpha_1 \in \sR^{r \times (k-1)}
\end{align*}
Aforementioned calculations show
\begin{equation*}
	d_{\gF^\down}(\hat{h}; h) \le \frac{1}{\Omega(\tilde{\nu})} d_{\gF^\pre, f^\pre}(\hat{h}; h), \quad \tilde{\nu} = \sigma_r(C).
\end{equation*}
Proof is completed.
\end{proof}
\subsection{Proofs for Section~\ref{sec:subspace}}\label{pf_app_1}
\begin{proof}
	We begin with bounding each of the complexity terms in the Corrolary~\ref{cor_test_phase}. 
	
	We make use of data-dependent inequalities~\citep[Lemma 4]{tripuraneni2020theory} to help upper bound related quantities. Intuitively Definition~\ref{defn_regularity_conditions} implies tail-bound properties in a sub-gaussian process.
	\begin{itemize}
		\item \begin{align*}
			\hat{G}_{n}(\gH) &= \frac{1}{n} \E \left[\sup_{B \in \gH} \sum_{k=1}^r \sum_{i=1}^n g_{ki} b_k^\top x_{i}^\pre \right]\\
			&= O \left( \sqrt{\frac{d r^2}{n}} \right)
		\end{align*}
		\item \begin{align*}
			\hat{G}_n(\gF^\pre| h \circ x^\pre ) &= \frac{1}{n} \E \left[\sup_{\alpha_1, \cdots, \alpha_{k-1}} \sum_{s=1}^{k-1} \sum_{i=1}^n g_{is} \alpha_s^\top B^\top x_{i}^\pre \right]\\
			&= \frac{c_1(k-1)}{n} \E \| \sum_{i=1}^n g_{is} B^\top x_{i}^\pre\| \\
			&= \frac{c_1 (k-1)}{\sqrt{n}} \sqrt{tr(B^\top \Sigma B)}
		\end{align*}
		then $\bar{G}_n(\gF^\pre) \le O\left((k-1)\sqrt{\frac{r}{n}} \right)$.
		\item Similarly, 
		\begin{align*}
			\hat{G}_m(\gF^\down | h \circ x^\down) \le \frac{c_1(k^\prime-1)}{\sqrt{m}} \sqrt{\sum_{i=1}^r \sigma_i(\hat{B}^\top \Sigma \hat{B})}
		\end{align*}
		then $\bar{G}_m(\gF^\down) \le O\left((k^\prime-1)\sqrt{\frac{r}{m}} \right)$.
		\item boundedness parameter $D_{\gX^\pre} = \sup_{\alpha, B} \| \alpha^\top B^\top x\| = c_2$
		\item cross entropy $\ell(\eta ; y) =  - y^\top \eta + \log{(1 + \sum_{s=1}^{k-1} e^{\eta_s})}$, then $\left| \frac{\partial \ell(\eta;y)}{\partial \eta_i} \right| = \left|y_i - \frac{e^{\eta_i}}{1 + \sum_{s=1}^{k-1} e^{\eta_s}}\right|$, $\left| \nabla_\eta \ell(\eta;y) \right| \le \sqrt{k-1}$, so it is $L^\pre =\sqrt{k-1}-$Lipschitz in its first coordinate uniformly over its second for pre-training and $L^\down = \sqrt{k^\prime-1}-$Lipschitz for downstream task. 
		\item $|\ell(\eta;y)| \le O(\|\eta\|)$, where $\|\eta\| = \| x^\top B^\pre \alpha\| \le c_2$. 
	\end{itemize}
	In Corollary~\ref{cor_test_phase}, we define and compute the maximal variance term $\sigma^2$ as,
	\begin{align*}
		\sigma^2 &= \frac{1}{m} \sup_{f^\down \in \gF^\down} \sum_{i=1}^m Var(\ell^\prime(f^\down \circ \hat{h}(x_{i}^\down), y_{i}^\down))\\
		&\le \frac{k^\prime-1}{m} \sup_{f^\down \in \gF^\down} \sum_{i=1}^m Var(f^\down \circ \hat{h}(x_{i}^\down))\\
		&= \frac{k^\prime-1}{m} \sup_{\|\alpha_s\|\le O(1)} \sum_{s=1}^{k^\prime-1} \sum_{i=1}^m Var(\alpha_s^\top \hat{B}^\top x_{i}^\down)\\
		&= \frac{(k^\prime-1)^2}{m} \sup_{\|\alpha_s\|\le O(1)} \sum_{i=1}^m (\alpha_s \hat{B})^\top \Sigma \hat{B} \alpha_s\\
		&= (k^\prime-1)^2 O(\|\hat{B} \Sigma \hat{B} \|_2)\\
		&= O \left((k^\prime-1)^2\right)
	\end{align*}
	
	With these results in hand, we are now prepared to apply Corollary~\ref{cor_test_phase}, w.p. at least $1-\delta$
	\begin{align*}
		&\quad R_\down( {\hat{f}}^\down, \hat{h} ) - R_\down(f^\down, h) \\
		&\le d_{\gF^\down}(\hat{h};h) + 4 \sqrt{\pi}L^\down \cdot \E_{\gX^\down} \hat{G}_m \left(\gF^\down\Big{|} \hat{h} \circ x^{\down}\right) + 4\sigma \sqrt{\frac{\log{(\nicefrac{2}{\delta})}}{m}} + 50B^\down \frac{\log{(\nicefrac{2}{\delta})}}{m}
	\end{align*}
	where $\hat{G}_m(\gF^\down | \hat{h} \circ x^\down)$ is defined in Theorem~\ref{thm_cross_entropy}.
	
	Thus $L^\down \cdot \E_{\gX^\down} \hat{G}_m \left(\gF^\down\Big{|} \hat{h} \circ x^{\down}\right) \le L^\down \bar{G}_m(\gF^\down) \le O((k^\prime-1)^{\frac{3}{2}} \sqrt{\frac{r}{m}})$, $\sigma \le O(k^\prime-1)$, and $B^\down \le O(\sqrt{k^\prime-1}D)$.
	Further, we obtain upper bound of worst-case representation difference by diversity parameter and adoption of Theorem~\ref{thm_train_phase}: w.p. at least $1-\delta$
	\begin{align*}
		&\quad d_{\gF^\down}(\hat{h}; h) \\
		&\le \frac{d_{\gF^{\pre}, f^{\pre}}(\hat{h};h)}{\nu}\\
		&\le \frac{1}{\nu}\left\{  4096L \left[ \log{(n)} \cdot [L(\gF^\pre)\cdot G_{n}(\gH)+\bar{G}_n(\gF^\pre)] + \frac{\sqrt{k-1} D_{\gX^\pre}}{n^2} \right] + 4B \sqrt{\frac{\log{(\nicefrac{2}{\delta})}}{n}}   \right\}\\
		&\lesssim \frac{1}{\nu}\left\{ \sqrt{k} \log{(n)} \left( \sqrt{\frac{k dr^2}{n}} + k \sqrt{\frac{r}{n}} \right)  +    \frac{k}{n^2} + \sqrt{\frac{\log{(\nicefrac{1}{\delta})}}{n}}  \right\}
	\end{align*}
	The last thing to consider for completing the proof for Theorem~\ref{thm1} is giving accurate characterization of diversity parameter $\nu$, which we leave for the next subsection.
\end{proof}
\subsection{Proofs for Section~\ref{sec:nn}}\label{pf_app_2}

\begin{proof}
In deep neural network, we first review complexity quantities. Adapted from Theorem 8~\citep{golowich2017size}, we have
\begin{align*}
	\hat{R}_n(\gN) &\leq \left( \frac{2}{n} \Pi_{p=1}^K M(p) \right) \sqrt{(K+1+\log d) \cdot \max_{j \in [d]} \sum_{i=1}^n x_{i,j}^2} \\
	&\leq \frac{2D \sqrt{K+1+\log d} \cdot \Pi_{p=1}^K M(p)}{\sqrt{n}}.
\end{align*}
where $x_{i,j}$ denotes the $j$-th coordinate of vector $x_i$.

Then we proceed to bound the Gaussian complexities for our deep neural network and prove Theorem~\ref{thm2}. Recall that under the conditions of the result we can use former results to verify the task diversity condition is satisfied with parameters $\Omega(\tilde{\nu})$ with $\tilde{\nu} = \sigma_{r}(\alpha_1 \alpha_1^\top) > 0$. We can see that $\|\E_{x} [\hat{h}(x) h^*(x)^\top \|_2 \leq \E{x} \|\hat{h}(x) h^*(x)\| \leq O(M(K)^2)$ using the norm bound from. Hence under this setting we can choose $c_1$ sufficiently large so that $c_1 M(K)^2 \gtrsim \frac{M(K)^2}{c} c_2$. The condition $M(K) \gtrsim 1$ in the theorem statement is simply used to clean up the final bound.

 In order to instantiate Theorem~\ref{thm_main} we begin by bounding each of the complexity terms in the expression.
\begin{itemize}
	\item For the feature learning complexity in the training phase, we leverage above results, then
	\begin{align*}
 			& \hat{G}_{n}(\gH)=\frac{1}{n} \E[ \sup_{\gW_K} \sum_{k=1}^r \sum_{i=1}^{n} g_{ki} h_k(x_{i}^\pre)] \leq \sum_{k=1}^r \hat{G}_{n} (h_k(x_{i}^\pre)) \\
 			&\leq \log(n) \cdot \sum_{k=1}^r \hat{R}_{n}{h_k(x_{i}^\pre)} \leq r \log(n) \frac{2D \sqrt{K+1+\log d} \cdot \Pi_{p=1}^{K} M(p)}{\sqrt{n}}. 
		\end{align*}
		 This also implies the population Gaussian complexity.
    \item  By definition the class $\gF$ as linear maps with parameters $\|\alpha_s\|_2 \leq c_1 M(K)^2, \forall s \in [k-1]$, we obtain that $L(\gF) = c_1 \sqrt{k-1} M(K)^2$.
		\item For the complexity of learning $\gF^\pre$ in the training phase we obtain,
		\begin{align*}
			&  \hat{G}_n(\gF^\pre | h \circ x^\pre) = \frac{1}{n} \E_{g}[\sup_{\alpha \in \gF} \sum_{s=1}^{k-1} \sum_{i=1}^n g_{is} \alpha_s^\top h(x_{i}^\pre) ] \lesssim  \frac{(k-1) M(K)^2}{n} \E_{g} [\| \sum_{i=1}^n g_{is} h(x_{1i})\|]   \\
 			& \lesssim \frac{(k-1) M(K)^2}{n} \sqrt{\sum_{i=1}^n \|h(x_{i}^\pre)\|^2}  \lesssim \frac{(k-1) M(K)^2}{\sqrt{n}} \max_i \|h(x_{i}^\pre)\|.
		\end{align*}
		For $tanh$ activation function, we simply have
		\begin{align*}
			\|h(x)\|^2 = \| W_K r_{K-1} \|_2^2 \le \|W_K\|_{\infty \to 2}^2,
		\end{align*} where $r_{K-1}$ denotes ourput of the $K-1$th layer,
		 $$ \|h(x)\| \le O (M(K)).$$ 
		 In conclusion we obtain
 		\begin{align*}
			\bar{G}_n(\gF^\pre) \leq O \left( \frac{(k-1) M(K)^3}{\sqrt{n}} \right). 		
		\end{align*}
	\item Similarly
	\begin{align*}
 			\hat{G}_m(\gF^\down | h \circ x^\down) \leq O \left( \frac{(k^\prime - 1)M(K)^3 }{\sqrt{m}}  \right) 
	\end{align*} 
Then for \textbf{Regularity conditions} we have
   \item  Boundedness parameter $D_{\gX^\pre} = \sup_{\alpha, h} \|\alpha^\top h(x^\pre)\| = c_2$.
   \item Pre-training loss is $L^\pre = \sqrt{k-1}$-Lipschitz and $B^\pre = c_2$-bounded.
   \item Downstream loss is $L^\down = \sqrt{k^\prime-1}$-Lipschitz and $B^\down=c_3$-bounded.
\end{itemize}

Assembling the previous complexity arguments shows the transfer learning risk is bounded by
 	\begin{align*}
		&  \lesssim \frac{L^\pre}{\tilde{\nu}} \left( \log(n) \left[ L(\gF^\pre) r \log(n) \frac{D \sqrt{K} \Pi_{p=1}^{K} M(p)}{\sqrt{n}} + \frac{k M(K)^3}{\sqrt{n}} \right] \right)+ \frac{L^\down k^\prime M(K)^3}{\sqrt{m}} \\
		& + \left(\frac{1}{\tilde{\nu}}  \max \left( \frac{L^\pre \sqrt{k} D_{\gX^\pre} }{n^2}, B^\pre \sqrt{\frac{\log(1/\delta)}{n}} \right)+ B^\down \sqrt{\frac{\log(1/\delta)}{m}}\right)
	\end{align*}
Substitute regularity conditions into it, then the risk is simplified as stated in Theorem~\ref{thm2}.
\end{proof}

\section{EXPERIMENTS}
\label{sec:exp}

\begin{table}[t] \label{tab:glue_benchmark}
	\caption{\textbf{Performance of diversity-regularized BERT pre-training with different values of diversity factor $\lambda$.} We finetune the pre-trained model on $8$ downstream tasks from GLUE benchmark and evaluate them on their dev sets. All results are ``mean (std)'' from 5 runs with different random seeds. For MNLI, we average the accuracies on its matched and mismatched dev sets. For MRPC and QQP, we average their accuracy and F1 scores. For STS-B, we average Pearson’s correlation and Spearman’s correlation. All other tasks uses accuracy as the metric. The better-than-baseline numbers are underlined, and the best numbers are highlighted in boldface.}
	\centering
	\resizebox{\columnwidth}{!}{%
		\begin{tabular}{l|cccccccc}
			\toprule
			Model   & MNLI & MRPC & SST-2 & CoLA & QQP & QNLI & RTE & STS-B\\
			\midrule
			BERT-base ($\lambda=0.005$)  & \underline{\textbf{84.17}} (0.23) & \underline{87.16} (1.81) & 92.48 (0.19) & 59.99 (0.28) & \underline{89.42} (0.08) & \underline{\textbf{88.11}} (0.54) & \underline{67.28} (3.43) & \underline{89.33} (0.07)\\
			BERT-base ($\lambda=0.05$)  & \underline{84.01}	(0.10) & \underline{86.35} (5.15) & \underline{\textbf{93.00}} (0.16) & \underline{\textbf{62.66}} (1.07) & \underline{\textbf{89.46}} (0.03) & 87.64	(0.44) & 60.64 (6.08) & \underline{\textbf{89.57}} (0.13) \\
			BERT-base ($\lambda=0.5$)  & \underline{84.00} (0.20) & \underline{\textbf{89.42}} (0.51) & \underline{92.93} (0.24) & 60.76 (0.71) & \underline{89.33} (0.12) & 88.01 (0.23) & \underline{\textbf{67.93}} (1.18) & \underline{89.22} (0.23)\\
			\midrule
			BERT-base (reproduced) & 83.96 (0.08) & 86.14 (4.64) & 92.64 (0.20) & 61.46 (0.74) & 89.28 (0.09) & 88.10 (0.27) & 63.64 (6.64) & 89.19 (0.07) \\
			\bottomrule
		\end{tabular}%
	}
\end{table}

Our theoretical analysis in previous sections implies that the diversity of the model parameter matrix at the linear output layer in pre-training has a significant impact on the transfer capability, in the sense that the larger $\nu$ (diversity parameter of $f^\pre$), the smaller the risk.
Therefore, we could \emph{explicitly add a diversity regularizer} to the linear output layer to increase diversity. Motivated by this, we propose to add the following diversity regularizer to the original BERT pre-training loss so that it becomes:
    \begin{align}
        L'(\Theta)
            &=
                L(\Theta) - \lambda \cdot \ln \det (\alpha^{\pre} \left(\alpha^{\pre}\right)^\top),
    \end{align}
where $\Theta$ denotes the set of all model parameters, $\lambda$ is a hyper-parameter that controls the magnitude of the diversity regularization, $\det(\cdot)$ denote the determinant of a matrix, and $\alpha^{\pre}$ is the model parameter matrix at the output linear layer. This type of diversity regularizer was proposed in~\citet{zou2012priors}.
This regularization technique is different from prior work because it is specifically designed for multi-class pre-training: we only add the diversity regularizer to the last linear layer.

We use the above diversity-regularized loss (along with the original $\ell_2$-regularization) to pretrain BERT-base models under different values of diversity factor $\lambda$. 
Then we fine-tune them on $7$ classification tasks and $1$ regression task from the GLUE benchmark~\citep{wang2018glue} to evaluate their transfer performance.\footnote{We do not report the WNLI (classification) task due to its reported issues of the task in~\citet{devlin2018bert}.} Our pre-training and finetuning implementations are based on the opensource code released by Nvidia.\footnote{Distributed under Apache License: \url{https://github.com/NVIDIA/DeepLearningExamples/tree/master/PyTorch/LanguageModeling/BERT}} We use the same pre-training data as the original BERT (i.e., English Wikipedia + TorontoBookCorpus).\footnote{Collected and pre-processed using the code and script included in the open-source code: \url{https://github.com/NVIDIA/DeepLearningExamples/tree/master/PyTorch/LanguageModeling/BERT}} Our detailed pre-training and finetuning hyper-parameters along with other experimental details are reported in Appendix~\ref{appendix_exp}.

In Table~\ref{tab:glue_benchmark}, we report our performance on the dev sets of the 8 downstream tasks. All the experiments are repeated $5$ times with different random seeds, and we report their mean values along with the standard deviations. The complete experiment results (including full MNLI, QQP, and MRPC results) can be found in Appendix~\ref{appendix_exp}. 
From Table~\ref{tab:glue_benchmark}, we note that adding the diversity regularization could generally improve the performance on these downstream tasks. 
In particular, when $\lambda=0.5$, our pre-trained model outperforms the original BERT-base on $6$ out of $8$ tasks (with $3$ of them being significant), while achieving comparable performance on the other $2$ tasks. 
Although our model is slightly behind the original BERT on CoLA and QNLI, such a performance gap is not statistically significant. Besides, we also see that our model with $\lambda=0.5$ achieves a much more stable performance (i.e., smaller std) on tasks with scarce finetuning data ($<4$K samples in MRPC and RTE). 
Our results, albeit still preliminary, demonstrate the potential of such a simple diversity-regularizer.
It could be an effective and simple performance booster for any of the existing pre-trained NLP models (e.g., XLNet~\citep{yang2019xlnet}, RoBERTa~\citep{liu2019roberta}, ALBERT~\citep{lan2019albert}, etc) with negligible computation and implementation cost.
We leave the development of the more advanced diversity regularizer as a future work.

\subsection{More Details} \label{appendix_exp}

Full statistics (including matched and mismatched dev sets for MNLI, accuracy and F1 scores for MRPC and QQP, and (Pearson’s correlation + Spearman’s correlation)/2 for STS-B. All other tasks uses accuracy as the metric) could be found in Table~\ref{tab:glue_full}.

\begin{table}[!htbp]\label{tab:glue_full}
\caption{Full statistics on GLUE dev sets.}
\centering
	\resizebox{\columnwidth}{!}{
		\begin{tabular}{l|c|cccccccc}
			\toprule
			Model & Statistics & MNLI(m/mm) & MRPC(acc/F1) & SST-2 & CoLA & QQP(acc/F1) & QNLI & RTE & STS-B\\
			\midrule
			$\lambda = 0.005$ &mean & 83.96/84.37 & 84.90/89.42 & 92.48 & 59.99 & 90.96/87.88 & 88.11 & 67.28 & 89.33\\
			&std & 0.26/0.21    & 2.28/1.34   & 0.19  & 0.28  & 0.05/0.11   & 0.54  & 3.43 & 0.07\\
			\midrule
			$\lambda = 0.05$ &mean & 83.88/84.14 & 83.72/88.98 & 93.00 & 62.66 & 90.97/87.96 & 87.64 & 60.64 & 89.57 \\
			&std & 0.04/0.16    & 6.69/3.62   & 0.16  & 1.07  & 0.05/0.04   & 0.44  & 6.08 & 0.13\\
			\midrule
			$\lambda = 0.5$	&mean & 83.96/84.04 & 87.75/91.09 & 92.93 & 60.76 & 90.85/87.81 & 88.01 & 67.93 & 89.22\\
			&std & 0.15/0.24    & 0.52/0.50   & 0.24  & 0.71  & 0.10/0.14   & 0.23  & 1.18 &0.23\\
			\midrule
			 BERT-base &mean & 83.85/84.07 & 83.48/88.80 & 92.64 & 61.46 & 90.87/87.68 & 88.10 & 63.64 & 89.19\\
			&std & 0.13/0.04    & 6.08/3.19   & 0.20  & 0.74  & 0.07/0.11   & 0.27  & 6.64 & 0.07\\
			\bottomrule
		\end{tabular}
		}
\end{table}

~\\
\paragraph{Complete statistics}
Here we provide complete results on GLUE dev sets over 5 random seeds.
\begin{table}[!htbp]
\small
	\caption{Performance of reproduced BERT-base model.}
	\centering
		\begin{tabular}{l|c|cccccccc}
			\toprule
			&GLUE   & MNLI(m/mm) & MRPC(acc/F1) & SST-2 & CoLA & QQP(acc/F1) & QNLI & RTE & STS-B\\
			\midrule
			&42  & 83.90/84.04 & 71.32/82.44 & 92.66 & 60.11 & 90.94/87.72 & 87.58 & 66.79 & 89.25 \\
			&0  & 83.86/84.12 & 86.27/90.18 & 92.43 & 62.05 & 90.74/87.53 & 88.19 & 54.29 & 89.07\\
			seed& 393  & 83.78/84.06 & 86.76/90.63 & 92.54 & 61.42 & 90.93/87.84 & 88.29 & 70.36 & 89.19\\
			&78 & 84.05/84.02 & 86.76/90.63 & 92.55 & 61.50 & 90.89/87.60 & 88.12 & 57.14 & 89.18\\
			&3837 & 83.66/84.11 & 86.27/90.13 & 93.00 & 62.20 & 90.87/87.73 & 88.33 & 69.64 & 89.26\\
			\midrule
			&mean & 83.85/84.07 & 83.48/88.80 & 92.64 & 61.46 & 90.87/87.68 & 88.10 & 63.64 & 89.19\\
			&std & 0.13/0.04    & 6.08/3.19   & 0.20  & 0.74  & 0.07/0.11   & 0.27  & 6.64 & 0.07\\
			\bottomrule
		\end{tabular}%
\end{table}

\begin{table}[!htbp]
\small
	\caption{Performance of $\lambda=0.005$ regularized pre-training model.}
	\centering
		\begin{tabular}{l|c|cccccccc}
			\toprule
			&GLUE  & MNLI(m/mm) & MRPC(acc/F1) & SST-2 & CoLA & QQP(acc/F1) & QNLI & RTE & STS-B\\
			\midrule
			&42  & 84.24/84.43 & 87.25/90.72 & 92.20 & 59.99 & 90.94/87.85 & 87.09 & 67.14 & 89.40\\
			&0  & 83.91/83.96 & 86.27/90.47 & 92.43 & 60.06 & 91.03/87.88 & 88.24 & 70.71 & 89.36\\
			seed &393 & 83.84/84.51 & 85.54/89.52 & 92.55 & 59.48 & 90.89/87.68 & 88.33 & 71.07 & 89.22 \\
			&78 & 84.23/84.44 & 84.80/89.45 & 92.78 & 60.13 & 90.95/87.97 & 88.71 & 65.71 & 89.38\\
			&3837 & 83.56/84.53 & 80.64/86.93 & 92.43 & 60.30 & 91.01/88.00 & 88.17 & 61.79 & 89.28\\
			\midrule
			&mean & 83.96/84.37 & 84.90/89.42 & 92.48 & 59.99 & 90.96/87.88 & 88.11 & 67.28 & 89.33 \\
			&std & 0.26/0.21    & 2.28/1.34   & 0.19  & 0.28  & 0.05/0.11   & 0.54  & 3.43 & 0.07\\
			\bottomrule
		\end{tabular}%
\end{table}

\begin{table}[!htbp]
\small
	\caption{Performance of $\lambda=0.05$ regularized pre-training model.}
	\centering
		\begin{tabular}{l|c|cccccccc}
			\toprule
			&GLUE   & MNLI(m/mm) & MRPC(acc/F1) & SST-2 & CoLA & QQP(acc/F1) & QNLI & RTE & STS-B\\
			\midrule
			&42  & 83.88/84.22 & 86.52/90.27 & 92.89 & 61.12 & 91.06/87.95 & 86.93 & 65.00 & 89.52 \\
			&0  & 83.83/83.92 & 88.97/92.00 & 93.00 & 64.36 & 90.96/87.93 & 87.73 & 54.29 & 89.65 \\
			seed&393  & 83.96/83.98 & 70.59/81.92 & 93.12 & 62.22 & 90.94/88.03 & 87.47 & 61.79 & 89.65 \\
			&78 & 83.86/84.26 & 87.50/91.06 & 92.78 & 63.13 & 90.94/87.97 & 88.26 & 68.93 & 89.69\\
			&3837 & 83.86/84.30 & 85.04/89.66 & 93.23 & 62.49 & 90.93/87.91 & 87.82 & 53.21 & 89.33 \\
			\midrule
			&mean & 83.88/84.14 & 83.72/88.98 & 93.00 & 62.66 & 90.97/87.96 & 87.64 & 60.64 & 89.57\\
			&std & 0.04/0.16    & 6.69/3.62   & 0.16  & 1.07  & 0.05/0.04   & 0.44  & 6.08 & 0.13\\
			\bottomrule
		\end{tabular}%
\end{table}

\begin{table}[!htbp]
\small
	\caption{Performance of $\lambda=0.5$ regularized pre-training model.}
	\centering
		\begin{tabular}{l|c|cccccccc}
			\toprule
			&GLUE   & MNLI(m/mm) & MRPC(acc/F1) & SST-2 & CoLA & QQP(acc/F1) & QNLI & RTE & STS-B\\
			\midrule
			&42  & 83.75/83.87 & 87.75/90.89 & 93.00 & 59.79 & 90.88/87.75 & 87.58 & 65.71 & 89.00 \\
			&0  & 83.84/84.30 & 88.24/91.56 & 92.66 & 60.94 & 90.87/87.77 & 88.14 & 67.86 & 89.25 \\
			seed&393  & 83.98/83.68 & 87.99/91.46 & 93.12 & 60.99 & 90.98/88.04 & 88.22 & 68.21 & 89.19\\
			&78 & 84.02/84.29 & 87.99/91.33 & 93.23 & 60.22 & 90.87/87.86 & 88.12 & 68.93 & 89.65\\
			&3837 & 84.19/84.05 & 86.76/90.21 & 92.66 & 61.86 & 90.67/87.61 & 87.98 & 68.93 & 89.03\\
			\midrule
			&mean & 83.96/84.04 & 87.75/91.09 & 92.93 & 60.76 & 90.85/87.81 & 88.01 & 67.93& 89.22 \\
			&std & 0.15/0.24    & 0.52/0.50   & 0.24  & 0.71  & 0.10/0.14   & 0.23  & 1.18& 0.23\\
			\bottomrule
		\end{tabular}%
\end{table}

\newpage

Finally, we report detailed hyperparameter settings below. 
\paragraph{Pre-training} Hyperparameters for pre-training are shown in Table~\ref{tab:pretraining_hyperparams}. 

\begin{table}[!htbp] 
\small
\begin{center}
\begin{tabular}{lccc}
\toprule
\textbf{Hyperparam}  & \textbf{phase-1} & \textbf{phase-2} \\
\midrule 
Number of Layers & 12 & 12 \\
Hidden size & 768 & 768 \\
FFN inner hidden size & 3072 & 3072 \\
Attention heads & 12 & 12 \\
Steps & 7038 & 1563 \\
Optimizer & LAMB & LAMB \\
Learning Rate & 9e-3 & 6e-3 \\
$\beta_1$ & 0.9 & 0.9 \\
$\beta_2$ & 0.999 & 0.999 \\
WarmUp & 28.43 \% & 12.80 \%\\
Batch Size & 65536 & 32768\\
\bottomrule
\end{tabular}
\end{center}
\caption{
Hyperparameters used in pre-training our models. We use the LAMB optimizer~\citep{you2019large} for large-batch pretraining of the BERT model, where $\beta_1$ and $\beta_2$ are its two hyper-parameters. 
}
\label{tab:pretraining_hyperparams}
\end{table}

\paragraph{Finetuning} Hyperparameters for downstream tasks are shown in Table~\ref{tab:downstream-hyperparameter}.
We adapt these hyperparameters from \cite{liu2019roberta},  \cite{devlin2018bert}, and \cite{yang2019xlnet}.

\begin{table}[!htbp] 
	\small
	\centering
	
	\begin{tabular}{c|ccccccc}
		& LR & BSZ & $\#$ EP & WARMUP & WD & FP16 & SEQ \\\hline
		CoLA & 1.00E-05 & 32 & 20 & 6\% & 0.1 & O2 & 128 \\
		SST-2 & 3.00E-05 & 32 & 10 & 6\% & 0.1 & O2 & 128 \\
		MNLI & 3.00E-05 & 32 & 5 & 6\% & 0.1 & O2 & 128 \\
		QNLI & 3.00E-05 & 32 & 10 & 6\% & 0.1 & O2 & 128 \\
		QQP & 3.00E-05 & 32 & 5 & 6\% & 0.1 & O2 & 128 \\
		RTE & 3.00E-05 & 16 & 5 & 6\% & 0.1 & O2 & 128 \\
		MRPC & 3.00E-05 & 16 & 5 & 6\% & 0.1 & O2 & 128
	\end{tabular}
	\caption{The hyperparameters used in finetuning our model in downstream tasks. LR: learning rate. BSZ: batch size. $\#$EP: number of epochs. WARMUP: warmup ratio.
		FP16: automatic mixed precision (AMP) level. SEQ: input sequence length.}
	\label{tab:downstream-hyperparameter}
\end{table}

\paragraph{Computing infrastructure} We pretrain our (diversity-regularized) BERT-base models using $32$ Nvidia V100 GPUs ($32$GB RAM each), and the finetuning of the model uses $4$ Nvidia V100 GPUs.

\vfill

\end{document}